\newcommand{\R}{\mathbb{R}}
\newcommand{\1}{\mathbb{1}}
\newcommand{\near}{\mathcal{N}}
\newcommand{\Order}{\mathcal{O}}
\newcommand{\G}{\mathcal{G}}
\newcommand{\GL}{GL}
\newcommand{\del}{\partial}
\newcommand{\toD}[1]{\{1, ..., #1\}}
\newcommand{\x}{\mathbf{x}}
\newcommand{\z}{\mathbf{z}}
\newcommand{\h}{\mathbf{h}}
\newcommand{\W}{\mathbf{W}}
\newcommand{\diff}{\mathrm{d}}
\newcommand{\pd}{\text{pd}}
\newcommand{\KL}{D_{\mathrm{KL}}}
\newcommand{\E}{\mathbb{E}}
\newtheorem{theorem}{Theorem}
\algnewcommand{\LineComment}[1]{\Statex \hskip\ALG@thistlm \(\triangleright\) #1}
\title{HollowFlow: Efficient Sample Likelihood Evaluation using Hollow Message Passing}
\author{%
  Johann Flemming Gloy$^1$ \quad Simon Olsson$^1$\thanks{Corresponding author}
   \\ 
  $^1$Department of Computer Science and Engineering,\\
  Chalmers University of Technology {and} University of Gothenburg\\
  SE-41296 Gothenburg, Sweden.\\  
  \texttt{\{gloy, simonols\}@chalmers.se}
}
\begin{document}

\maketitle

\begin{abstract}
Flow and diffusion-based models have emerged as powerful tools for scientific applications, particularly for sampling non-normalized probability distributions, as exemplified by Boltzmann Generators (BGs). A critical challenge in deploying these models is their reliance on sample likelihood computations, which scale prohibitively with system size \(n\), often rendering them infeasible for large-scale problems. To address this, we introduce \textit{HollowFlow}, a flow-based generative model leveraging a novel non-backtracking graph neural network (NoBGNN). By enforcing a block-diagonal Jacobian structure, HollowFlow likelihoods are evaluated with a constant number of backward passes in \(n\), yielding speed-ups of up to \(\mathcal{O}(n^2)\): a significant step towards scaling BGs to larger systems. Crucially, our framework generalizes: \textbf{any equivariant GNN or attention-based architecture} can be adapted into a NoBGNN.  We validate HollowFlow by training BGs on two different systems of increasing size. For both systems, the sampling and likelihood evaluation time decreases dramatically, following our theoretical scaling laws. For the larger system we obtain a \(10^2\times\) speed-up, clearly illustrating the potential of HollowFlow-based approaches for high-dimensional scientific problems previously hindered by computational bottlenecks.
\end{abstract}

\section{Introduction}
\begin{figure}[!ht]
     \centering
     \includegraphics[width=0.85\textwidth]{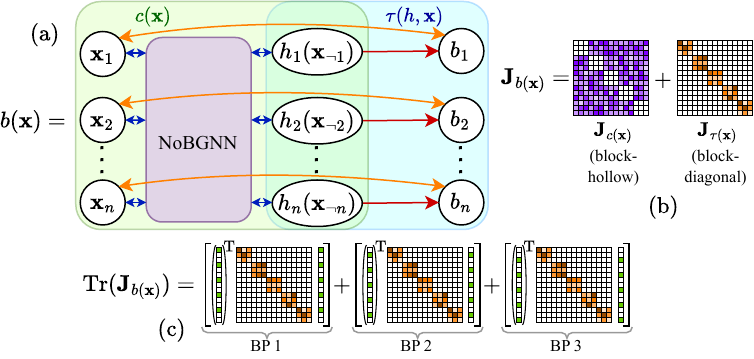}
     \caption{{\bf Summary of HollowFlow}: (a) A vector field \(b\) is parametrized with an (equivariant) non-backtracking graph neural network (NoBGNN) and a series of (equivariant) neural networks \(\tau_i\), such that its Jacobian \(\mathbf{J}_b\) (b) can be decomposed into a block-hollow and block-diagonal part with block size \(d\) (in this example, \(d=3, n=5\)). (c) Efficient evaluation of the trace of the Jacobian with only \(d\) instead of \(nd\) backward passes (BP) through the network. The vector components are one where the entry is green and zero elsewhere.}
     \label{fig:overview}
\end{figure}
Efficiently sampling high-dimensional, non-normalized densities --- a cornerstone of Bayesian inference and molecular dynamics --- remains computationally intractable for many scientific applications. Boltzmann generators (BGs) \cite{noe2019boltzmann} address this by training a surrogate model \(\rho_1(\x)\) to approximate the Boltzmann distribution \(\mu(\x) = Z^{-1}\exp(-\beta u(\x))\), where \(u(\x)\) is the potential energy of the system configuration \(\x \in \Omega \subset \R^{N}\), \(\beta\) the inverse temperature, and \(Z\) the intractable partition function. While BGs enable unbiased estimation of observables through reweighting \(\x_i \sim \rho_1\) with weights \(w_i \propto \mu(\x_i)/\rho_1(\x_i)\), their practicality is limited by a fundamental trade-off: likelihood computations of expressive surrogates rely on \(N\) automatic differentiation backward passes with system dimensionality \(N\), rendering them infeasible for large-scale problems.

To address this problem, we outline a framework combining non-backtracking equivariant graph neural networks with continuous normalizing flows (CNFs), breaking this trade-off. This strategy enforces a Jacobian that can easily be split into a \textbf{block-diagonal} and a \textbf{block-hollow} part, reducing the number of backward passes required for likelihood computation to scale as $\Order(1)$ in $N$. Our contributions include:
\begin{itemize}

    \item[] {\bf Hollow Message Passing (HoMP):} A general scheme for message passing with a block-diagonal Jacobian structure.
    \item[] {\bf Theoretical Proofs and Guarantees:} A proof of the block-diagonal structure of HoMP and scaling laws of the computational complexity of forward and backward passes.
\item[] {\bf Scalable Boltzmann Generators:} A HollowFlow-based BG implementation achieving \(10^1\)–\(10^2\times\) faster sampling and likelihood evaluation, e.g., \(10^2\times\) for LJ55, a system of 55 particles in 3-dimensions interacting via pair-wise Lennard-Jones potentials.\end{itemize}
Experiments on multi-particle systems validate HollowFlow’s scalability, bridging a critical gap in high-dimensional generative modelling for the sciences.

\section{Background}
\subsection{Boltzmann Generators and Observables }
Computing observables by averaging over  the Boltzmann distribution is a central challenge in chemistry, physics, and biology. The Boltzmann distribution is given by,
\begin{equation}
    \mu(\x) = Z^{-1}\exp(-\beta u(\x))
\end{equation}
where \(\beta = (k_\textrm{B}T)^{-1}\) is the inverse thermal energy and \(u:\Omega \rightarrow \R\) is the potential energy of a system configuration \(\x \in \Omega \subseteq \R^N\).

Observables are experimentally measurable quantities, such as free energies \cite{Prinz2011a,Olsson2017,Kolloff2023}. Generating ensembles which align with certain observables can give us fundamental insights into the physics of molecules, including mechanisms, rates, and affinities of drugs binding their targets \cite{Buch2011,Plattner2015}, protein folding \cite{Lindorff-Larsen2011,No2009}, or phase transitions \cite{Parrinello1981,Swope1990,tenWolde1995,Jung2023}. 

The primary strategies for generating samples from the Boltzmann distribution involve Markov Chain Monte Carlo (MCMC) or molecular dynamics (MD) simulations. These approaches generate unbiased samples from the Boltzmann distribution asymptotically, however, mix slowly on the complex free energy landscape of physical systems, where multiple, metastable states are separated by low-probability channels. In turn, extremely long simulations are needed to generate independent sampling statistics using these methods. In the molecular simulation community, this problem has stimulated the development of a host of enhanced sampling methods, for a recent survey, see \cite{Henin2022}.

Boltzmann Generators \cite{noe2019boltzmann} instead try to learn an efficient surrogate from biased simulation data, and recover unbiased sampling statistics through importance weighting. The key advantage of BGs is that sampling using modern generative methods, such as normalizing flows, is much faster than the conventional simulation based approaches, which would enable amortization of the compute cost per Boltzmann distributed sample.

\subsection{Normalizing Flows}
To enable reweighting, evaluating the sample likelihood under a model, \(\rho_1\), is critical. A {\it normalizing flow} (NF) \cite{cms/1266935020, 1505.05770} allows efficient sampling, {\it and} sample likelihood evaluation, by parameterizing a diffeomorphism, \(\phi_\theta : \R^N\rightarrow \R^N\), with a constrained Jacobian structure. The diffeomorphism (or {\it flow}) \(\phi_\theta\)  maps an easy-to-sample base distribution, \(\rho_0\) to an approximation \(\rho_1\) of the empirical data density, \(p_D\), e.g., by minimizing the forward Kullback-Leibler divergence, \(\KL(p_D||\rho_1)= \E_{\x\sim p_D}\left[\log p_D(\x)-\log \rho_1(\x)\right]\). The change in log-density when transforming samples with \(\phi_\theta\) is given by
\begin{equation}
    \Delta \log \rho^{\text{NF}}(\x) := \log \rho_1(\x) - \log\rho_0(\phi_{\theta}^{-1}(\x))=  \log \left |\det \frac{\partial \phi_{\theta}^{-1}(\x)}{\partial \x}\right |.
\end{equation}
Computing the log-determinant of a general \(N\times N\) Jacobian is computationally expensive (\(\mathcal{O}(N^3)\)), however, endowing the Jacobian \(\frac{\partial \phi_{\theta}^{-1}(\x)}{\partial \x}\) with structure, i.e., triangular \cite{1605.08803, 1705.07057, kingma2018glow}, can make the likelihood evaluation fast.  

\paragraph{Continuous Normalizing Flows} 
A flow can also be parametrized as the solution to an initial value problem, specified by a velocity field \(b_\theta(\x, t): \R^N\times [0,1]\rightarrow \R^N \) of an ordinary differential equation (ODE)
\begin{align}
    \diff \x(t) = b_\theta(\x(t), t)\,\diff t,\,\, \x(0) \sim \rho_0, \,\, \x(1)\sim \rho_1.
    \label{eq:ode}
\end{align}
The general solution to this problem is a flow, \(\phi_\theta(\x(t),t): \R^N\times[0,1]\rightarrow \R^N\), which satisfies the initial condition, e.g., \(\phi_\theta(\x(0),0)=\x(0)\). Such a model is known as a {\it continuous normalizing flow }(CNF). The ODE \cref{eq:ode} together with the prior \(\rho_0\) gives rise to a time-dependent distribution \(\rho(\x(t), t)\) given by the continuity equation
\begin{align}
	\partial_t \rho_t = - \nabla\cdot (\rho_t b_{\theta}(\x,t)),
    \label{eq:cont}
\end{align}
with \(\partial_t\) and \(\nabla\cdot\) denoting the partial derivative w.r.t. \(t\) and the divergence operator, respectively. Solving \cref{eq:cont} we get the change in log-probability for the CNF,
\begin{align}
	\Delta \log \rho^{\text{CNF}} :=\log\rho_1(\x(1)) - \log\rho_0(\x(0)) = - \int_{0}^{1} \nabla\cdot b_{\theta}(\x(t),t)\,\diff t.
    \label{eq:rho1}
\end{align}
In principle, CNFs allow for much more expressive architectures to be used to parametrize $b_\theta$. 

The currently most scalable approach to train CNFs is conditional flow matching (CFM) or related objectives \cite{liu2022rectified,cfm2023,Albergo2023,tong2024improving}. In CFM, the intractable flow matching loss, where \(b_\theta\) is regressed directly against the unknown true vector field \(u_t(\x)\), of a flow \(\phi\), is replaced by the conditional flow matching loss which has the same gradient w.r.t. the parameters \(\theta\) \cite{cfm2023}: 
\begin{align}
    \mathcal{L}_{\text{CFM}} = \E_{t \sim \mathcal{U}[0,1], \x\sim p_t(\x|\z), \z \sim p(\z)}||b_\theta(\x, t) - u_t(\x|\z)||_2^2.
\end{align}
The true vector field \(u_t(\x)\) and its corresponding probability density \(p_t(\x)\) are given as marginals over the conditional probability and an arbitrary conditioning distribution \(p(\z)\):
\begin{align}
    p_t(\x) = \int p_t(\x|z) p(\z)\,\diff \z \quad \text{and} \quad u_t(\x) = \int \frac{p_t(\x|\z)u_t(\x|\z)}{p_t(\x)}p(\z)\,\diff \z.
\end{align}
where we parameterize \(p_t(\x|\z)\) and \(u_t(\x|\z)\) with a linear interpolant with connections to optimal transport theory \cite{tong2024improving}:
\begin{align}
    z = (\x_0,\x_1), \quad p(\z) = \pi(\x_0, \x_1) \quad p_t(\x|\z) = \near(t\x_1 + (1-t)\x_0, \sigma), \quad u_t(\x|\z) = \x_1 - \x_0,
\end{align}
where the coupling \(\pi(\x_1, \x_0)\) minimizes the 2-Wasserstein optimal transport map between the prior \(\rho_0(\x_0)\) and the target \(\mu(\x_1)\) using a mini-batch approximation \cite{equivfm, cfm2023, tong2024improving}.

Unfortunately, akin to general Jacobian log-determinant calculation of normalizing flows, computation of \(\nabla\cdot b_\theta(\x(t),t)\) (\cref{eq:rho1}) requires \(\mathcal{O}(N)\) backward passes, for each numerical integration step along \(t\), making the evaluation of sample likelihoods impractical in high-dimensions, and consequently, their use for BGs limited. 

\paragraph{Equivariant flows}
Symmetries can be described in terms of a group \(\G\) that acts on a finite-dimensional vector space \(V \cong \R^d\) through a group representation \(\rho: \G \rightarrow \GL(d, \R)\). A function \(f: \R^d \rightarrow \R^d\) is called \(\G\)-equivariant if \(f(\rho(g)x) = \rho(g)f(x)\) and \(h: \R^d \rightarrow V'\) is called  \(\G\)-invariant if \(h(\rho(g)\x) = h(\x)\) for all \(g \in \G\) and \(\x \in \R^d\) where $V'$ is another vector space. If the vector field \(b_{\theta}\) of a CNF is \(\mathcal{H}\)-equivariant, where \(\mathcal{H}<\G\) and the prior density \(\rho_0\) is \(\G\)-invariant it has been shown that the push forward density, of the flow \(\phi\), \(\rho_1\) is  \(\mathcal{H}\)-invariant \cite{kohler_equiv2020, rezende2019equivarianthamiltonianflows, equivfm}. In practical applications, symmetries under the Euclidean group \(E(3)\) are of particular interest as these are the symmetries that are naturally obeyed by many physical systems such as molecules. Thus, several GNN architectures have been proposed that are equivariant or invariant under the action of \(E(3)\) or some subgroup of \(E(3)\) \cite{PaiNN, geiger2022e3nneuclideanneuralnetworks}.

\subsection{HollowNets}

	Reverse-mode automatic differentiation, or back-propagation, allows for the computation of vector-Jacobian products with a cost which is asymptotically equal to that of the forward pass \cite{Griewank2008}. In practice, we use this mode to compute the divergence of velocity fields parametrized with neural networks. However, isolating the diagonal of the Jacobian to compute the divergence, for general, free-form vector fields, requires \(\mathcal{O}(N)\) backward passes \cite{Chen19, grathwohl2018scalable}. 
	
	Using a special neural network construction {\it HollowNet}, Chen and Duvenaud showed, that one can access the full diagonal of the Jacobian, \(\mathbf{J}\in \R^{N\times N}\), with a single backward pass.
A HollowNet depends on two components:
	\begin{align}
		&\textbf{Conditioner: } h_i = c_i(\x_{\neg i}), \text{where } c_i: \R^{N-1} \rightarrow \R^{n_h} \label{eq:cond}\\
		&\textbf{Transformer: } b_i = \tau_i(\x_i, \h_i), \text{where } \tau_i: \R \times \R^{n_h}\rightarrow \R  \label{eq:trans}
	\end{align}
	where $h_i$ is a latent embedding which is independent of the $i$'th dimension of the input, $\x$. Applying the chain rule to \(b_i = \tau_i(r_i, c_i(r_{\neg i}))\), the Jacobian \(\mathbf{J} = \frac{\diff b}{\diff \x}\) decomposes as follows:
	\begin{align}
		\frac{\diff b}{\diff \x} = \underbrace{\frac{\del \tau}{ \del \x}}_{\mathbf{J}^{\mathrm{diag}}} + \underbrace{\frac{\del \tau}{\del \h} \frac{\del \h}{\del \x}}_{\mathbf{J}^{\mathrm{hollow}}}, 
	\end{align}
	which splits the Jacobian into diagonal $\mathbf{J}^{\mathrm{diag}}$ and `hollow' $\mathbf{J}^{\mathrm{hollow}}$ contributions \cite{Chen19}. The divergence can then be evaluated in $\mathcal{O}(1)$ backward passes, through rewiring the compute graph and computing $\1^T \mathbf{J}^{\mathrm{diag}}$.

\section{Hollow Message Passing}
\label{sec:hollowmp}
	It is straightforward to construct HollowNets for regular feed-forward neural networks, and convolutional neural networks, through masking operations \cite{Chen19}. However, extending these principles to cases where the data or its domain are subject to other symmetries is not as straightforward. 
	
	To address this problem, we develop {\bf Hollow Message Passing} (HoMP) to work with graph data, subject to permutation equivariance. We further specialize this approach to work with data which is subject to common Euclidean symmetries, useful in molecular applications. Here, molecules and particle systems are represented as points in $\R^d$ (typically \(d=3\)), one for each of the $n$ atoms or particles, making $N=dn$. HoMP depends on the following developments:

\begin{enumerate}
    \item[] {\bf Permutation Equivariant Conditioner:} We make the conditioner functions \(\{c_i\}_{i=1}^{n}\) permutation equivariant, using a {\bf non-backtracking GNN} (NoBGNN) in combination with a changing underlying graph structure such that it fulfils the same constraints as the original conditioner. Namely, \(\frac{\partial h_i}{\partial \x_i} = 0 \ \forall i \in \toD{n}\).
   
    \item[] {\bf Euclidean Symmetries of Node Features:} We lift the HollowNet from working with scalar features, i.e., \(\x_i \in \R\) to work with Euclidean features, e.g., \(\x_i \in \R^d \). Moreover, NoBGNN allows for the use of message passing architectures that are invariant or equivariant under the group action \(\rho(g), g \in \G\) of a group \(\mathcal{G}\). Furthermore, for each \(i\) there can be additional scalar input data $Z_i \in \R$. In practice (see \cref{sec:exp}), \(\x_i \in \R^3\) are the coordinates of an atom while \(Z_i\) encodes the atom type.
\end{enumerate}

\paragraph{Permutation Equivariant Conditioner} We design a GNN for the conditioner such that information from any node never returns to itself. A GNN where this holds for one message passing step is called a non-backtracking GNN \cite{nonbacktracking} or a line graph GNN  \cite{lg_community} and can be constructed as follows:

Let \(G = (N, E)\) be a directed graph with nodes \(N := \{1, ..., n\}\) and edges \(E \subseteq \{(i,j)|i,j \in N, i\neq j\}\).  To each node \(i\), we assign a node feature \(n_i\). Given \(G\), we construct the non-backtracking line graph \(L(G) = (N^{lg}, E^{lg})\), where
\begin{align}
    N^{lg} = E, \quad 
    E^{lg} = \{(i,j,k)| (i,j) \in E, (j,k) \in E \text{ and } i \neq k\}.
\end{align}
The node features of \(L(G)\) are \(n^{lg}_{ij}\) are defined to be
\begin{align}
    n^{lg}_{ij} = n_i.
    \label{eq:node_features_lg}
\end{align}
The hidden states \(h^0_{ij}\) on \(L(G)\) are initialized to be \(h^0_{ij} = n^{lg}_{ij}\). Using this setup, a standard message passing scheme \cite{gilmer2017neuralmessagepassingquantum} is applied on $L(G)$:
\begin{align}
    m^t_{lij} = \phi(h^t_{li}, h^{t}_{ij}), \quad 
    m^t_{ij} &= \sum_{l \in \near^{lg}(i,j)} m^t_{lij}, \quad
    h^{t+1}_{ij} = \psi(h^{t}_{ij}, m^t_{ij})
    \label{eq:mp_lg},\\
    b_j &= \sum_{i \in \near(j)}R(h_{ij}^{T^{lg}}, n_{j})
    \label{eq:mp_readout_lg}
\end{align}
\(\phi\), \(\psi \) and \(R\)  are all learnable functions, $\near^{lg}(i,j) = \{k|(k,i,j)\in E^{lg}\}$, $\near(i) = \{k|(k,i) \in E\}$ and $T^{lg}$ is the number of message passing steps on the line graph.
\begin{figure}[ht!]
     \centering
     \begin{subfigure}[b]{0.3\textwidth}
         \centering
         \includegraphics[width=\textwidth]{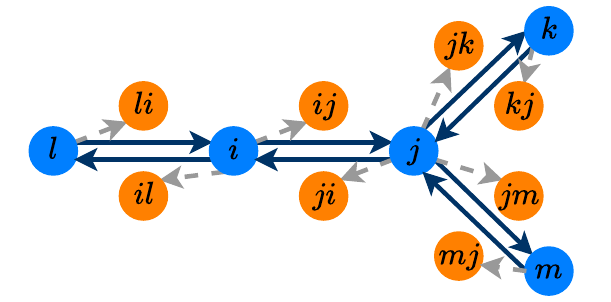}
         \caption{}
         \label{fig:LG_a}
     \end{subfigure}%
     \hfill
     \begin{subfigure}[b]{0.3\textwidth}
         \centering
         \includegraphics[width=\textwidth]{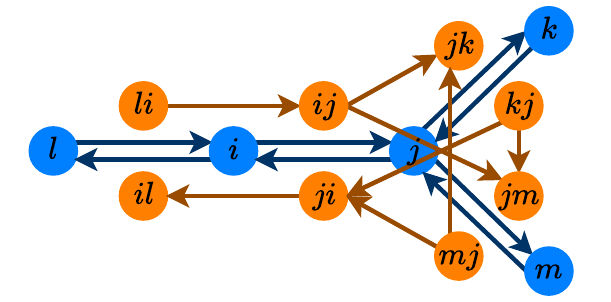}
         \caption{}
         \label{fig:LG_b}
    \end{subfigure}%
    \hfill
    \begin{subfigure}[b]{0.3\textwidth}
         \centering
         \includegraphics[width=\textwidth]{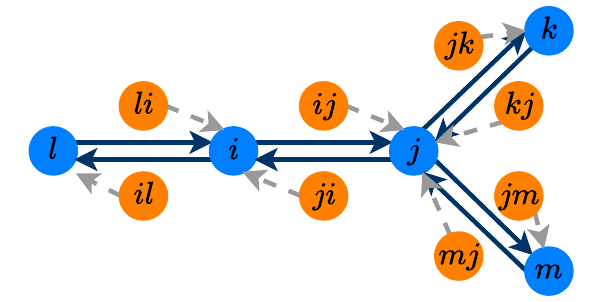}
         \caption{}
         \label{fig:LG_c}
     \end{subfigure}
     \caption{ {\bf Schematic example of line graph message passing.} Blue: original graph (\(G\)), orange: line graph (\(L(G)\)), gray: information flow between \(G\) and \(L(G)\). \subref{fig:LG_a} Construction of the line graph nodes. Every node in \(L(G)\) corresponds to one edge in G. The dashed arrows indicate information flow before the message passing from \(G\) to \(L(G)\).  \subref{fig:LG_b} Construction of the edges of \(L(G)\). \subref{fig:LG_c} Information flow after the message passing (dashed arrows) from \(L(G)\) to \(G\).}
    \label{fig:lg}
\end{figure}
This construction is illustrated in \cref{fig:lg}. The blue nodes and edges are part of \(G\) while the orange nodes and edges are part of \(L(G)\). The gray dashed arrow in \cref{fig:LG_a} illustrate the projection of the node features onto \(L(G)\) (\cref{eq:node_features_lg}). The orange arrows in \cref{fig:LG_b} illustrate the message passing on \(L(G)\) (\cref{eq:mp_lg}). \Cref{fig:LG_c} visualizes the projection of the node features from \(L(G)\) back to \(G\) as summarized in \cref{eq:mp_readout_lg}. \Cref{eq:node_features_lg,eq:mp_lg} take the role of the conditioner $c(\mathbf{x})$ while \cref{eq:mp_readout_lg} takes the role of the transformer $\tau(h, \mathbf{x})$ (see \cref{app:derivations}).

So far, this GNN is, except in special cases that we detail in \cref{app:derivations}, only non-backtracking if just one message passing step is performed.   In general, to ensure the non-backtracking nature of the GNN, some connections in $L(G)$ need to be removed after an appropriate number of message passing steps. This can be done by keeping track of which nodes of $L(G)$ received information of which nodes of $G$ after each message passing step. To this end, we define the three dimensional time-dependent backtracking array \(B(t) \in \R^{n \times n \times n}\)
\begin{align}
    B(t)_{ijk} =  
    \begin{cases}
    1 \quad \text{if} \ \frac{\del h^{t}_{ij}}{\del \x_k} \neq 0\\
    0 \quad \text{else.}
    \end{cases}
    \label{eq:Bt}
\end{align}
with the additional convention that \(\frac{\del h^t_{ij}}{\del \x_k} = 0\) whenever \((i,j) \notin E \). Notice that whenever \(B(t)_{ijj} = 1\) information that started in node \(j\) will return to node \(j\) during readout. \Cref{alg:hmp} shows in detail how \(B(t)\) is calculated (steps 8 and 11) and used to remove edges in \(L(G)\) (step 10). A more detailed explanation and a proof that this construction does indeed result in a non-backtracking GNN for an arbitrary number of message passing steps is given in \cref{app:derivations}.

\paragraph{Euclidean Symmetries of Node Features:} In typical molecular applications, nodes are represented in Euclidean space (e.g., $\mathbf{x}_i \in \mathbb{R}^d$ where $d = 3$) and may have additional features $Z_i$ (such as atom type). Our goal is to embed $(\mathbf{x}_i, Z_i) \in \mathbb{R}^d \times \mathbb{R}$ into a set of equivariant ($\mathbf{v}_i$) and invariant ($s_i$) node features, \(n_i = (v_i, s_i) = \texttt{embed}(\x_i, Z_i) \in \R^{n_h\times d}\times\R^{n_h}\) which serve as inputs for message passing on $L(G) $(see \cref{eq:node_features_lg}). For this task, we can use any $\mathcal{G}$-equivariant embeddings inherent to GNN architectures such as PaiNN or E3NN (for $d=3$ and $\mathcal{G} < E(3)$) \cite{PaiNN, geiger2022e3nneuclideanneuralnetworks}.

The message passing and readout functions (\cref{eq:mp_lg,eq:mp_readout_lg}) are made $\mathcal{G}$-equivariant by using the message and update functions of any $\mathcal{G}$-equivariant GNN. As we only modify the underlying graph of the GNN, this trivially ensures equivariance.

By construction, the $d$-dimensional node inputs $\mathbf{x}_i$ allow the Jacobian \(\mathbf{J}_b\) to be easily split into a block-diagonal and block-hollow part with block size \(d \times d\). This observation and its consequences are summarized in  \cref{alg:hmp} and \cref{th:hmp} (see \cref{app:derivations} for a proof and additional explanations). \Cref{alg:hmp} can also run on the pairwise euclidean differences of the node features  \(\x_i\) (\(\text{pd}=true\) in \cref{alg:hmp}) . This is common practice in molecular applications to achieve translation invariance.  
A brief illustration of \cref{alg:hmp} can be found in \cref{fig:overview}.

\begin{algorithm}
\caption{Hollow message passing}
\begin{algorithmic}[1]
\Require \(\{\x_i, Z_i\}_{i=1}^n \text{ where } \ \x_i \in \R^d \text{ and } Z_i \in \R, \quad \pd \in \{true, false\}\) 
\Ensure \(\{b_i\}_{i=1}^n, \ b_i \in \R^d\)
\If{\pd}
    \State \(d_{ij} = \x_i - \x_j\), \quad \(e_{ij} = \texttt{embed}(d_{ij}, Z_i)\)
\Else
    \State  \(n_i = \texttt{embed}(\x_i, Z_i)\)
\EndIf
\State Calculate \(G = (N,E)\), e.g., as a \(k\)NN graph. Self-loops, i.e., \((i,i) \in E\) are prohibited.
\State{Calculate the line graph \(L(G) = (N^{lg}, E^{lg})\) and its node features \(n^{lg}_{ij}\):}
\Statex \(N^{lg} = E\), \ \(E^{lg} = \{(i,j,k)| (i,j) \in E, (j,k) \in E \text{ and } i \neq k\}\), \quad \(n^{lg}_{ij} = \begin{cases}
    e_{ij} \ \text{if } \pd\\
    n_i \  \text{else}
\end{cases}\)  

\State{Calculate initial node features \(h^0_{ij}\) and initial backtracking array \(B(0)_{ijk}\):} 
\Statex\( h^0_{ij} =\begin{cases}
    \tilde{\psi}(\ \ \sum\limits_{\mathclap{k \in \near^{lg}(i,j)}} \ \ \tilde{\phi}(n^{lg}_{ki})) \ \text{if } \pd\\[.5cm]
    n^{lg}_{ij} \ \text{else}
\end{cases}\), \ \(B(0)_{ijk} =  
    \begin{cases}
    (1 \ \text{if} \ k \in \near^{lg}(i,j),\ 0 \ \text{else}) \text{ if } \pd \\[.5cm]
    (1 \  \text{if} \ k = i, \ 0 \  \text{else})\text{ else } 
    \end{cases}\)
\Statex\For{\(t \gets 0\) to \((T^{lg}-1)\)}
    \State Remove appropriate edges of \(L(G)\): \(E^{lg}  \gets E^{lg} \setminus \{(i,j,k) \in E^{lg}| B(t)_{ijk} = 1\}\) 
    \State Update backtracking array: \(B(t + 1)_{ijk} = \max_{l \in \near^{lg}(i,j)}\{ B(t)_{lik}, B(t)_{ijk}\}\)
    \State Do message passing: 
    \Statex{\hspace{.55cm}\(m^t_{kij} = \phi(h^{t}_{ij},h^t_{ki}), \quad m^t_{ij} = \sum_{k \in \near^{lg}(i,j)} m^t_{kij}, \quad
    h^{t+1}_{ij} = \psi(h^{t}_{ij}, m^t_{ij})\)}
\EndFor
\State Perform readout and project back to \(G\):
\Statex \(b_j = \sum_{i \in \near(j)}R(h_{ij}^{T^{lg}}, n_{ij}), \quad \text{where } n_{ij} = \begin{cases}
    \texttt{embed}(\x_i.\texttt{detach()} - \x_j, Z_i) \quad \text{if } \pd\\
    n_{j} \quad \text{else}
\end{cases}\)
\end{algorithmic}
\label{alg:hmp}
\end{algorithm}

\begin{theorem}[Block-hollowness of HoMP]
\Cref{alg:hmp} defines a function \(b: \R^{dn} \rightarrow \R^{dn}\) whose Jacobian $\mathbf{J}_{b(\mathbf{x})} \in \R^{dn \times dn}$ can be split into a block-hollow and a block-diagonal part, i.e.,
\begin{align}
    \mathbf{J}_{b(\mathbf{x})} = \mathbf{J}_{c(\mathbf{x})}+ \mathbf{J}_{\tau(\mathbf{x})}
\end{align}
where \(  \mathbf{J}_{c(\mathbf{x})} \) is block-hollow, while \(\mathbf{J}_{\tau(\mathbf{x})}\) is block-diagonal with block size \(d\times d\), respectively. This structure of the Jacobian enables the exact computation of differential operators that only include diagonal terms of the Jacobian with \(d\) instead of \(nd\) backward passes through the network.
\label{th:hmp}
\end{theorem}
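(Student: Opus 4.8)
The plan is to show that \cref{alg:hmp} is an instance of the HollowNet conditioner/transformer template (\cref{eq:cond,eq:trans}) lifted from scalar to \(d\)-dimensional node features. I would read the readout step as the transformer, \(b_j = \tau_j(\x_j, h_j)\) with \(b_j = \sum_{i\in\near(j)} R(h^{T^{lg}}_{ij}, n_{ij})\), where the conditioner output is the collection of final line-graph states incident on \(j\), namely \(h_j := \{h^{T^{lg}}_{ij}\}_{i\in\near(j)}\), and where the only gradient-carrying dependence on \(\x_j\) enters through the explicit argument \(n_{ij}\): via \(n_{ij}=n_j=\texttt{embed}(\x_j,Z_j)\) when \(\pd=false\), and via \(n_{ij}=\texttt{embed}(\x_i.\texttt{detach()}-\x_j,Z_i)\) when \(\pd=true\), where \texttt{detach} removes the gradient through \(\x_i\) so that only \(\x_j\) survives. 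Under this identification the theorem collapses to one claim, the \emph{hollowness of the conditioner}: \(\partial h^{T^{lg}}_{ij}/\partial \x_j = 0\) for every \(i\in\near(j)\).

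Granting hollowness, the decomposition is a block-wise chain rule. Writing the \((j,i)\) block of \(\mathbf{J}_b\) as \(\partial b_j/\partial \x_i\in\R^{d\times d}\), I differentiate the readout: the diagonal block \(\partial b_j/\partial \x_j\) splits into \(\sum_i (\partial R/\partial n_{ij})(\partial n_{ij}/\partial \x_j)\), arising solely from the explicit transformer argument, plus \(\sum_i (\partial R/\partial h^{T^{lg}}_{ij})(\partial h^{T^{lg}}_{ij}/\partial \x_j)\), which vanishes by hollowness. Assembling the surviving term over all \(j\) defines the block-diagonal \(\mathbf{J}_{\tau(\x)}\), and \(\mathbf{J}_{c(\x)} := \mathbf{J}_b - \mathbf{J}_{\tau(\x)}\) then has zero diagonal blocks, i.e. is block-hollow. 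The complexity claim follows from the HollowNet rewiring argument of \cref{sec:hollowmp}: treating \(h_j\) as constant isolates \(\mathbf{J}_{\tau(\x)}\), whose \(d\times d\) diagonal blocks are read off with \(d\) vector-Jacobian products (one per intra-block coordinate, all \(n\) blocks at once), independent of \(n\), versus \(nd\) for a free-form field.

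The core of the work is hollowness, which I would prove by induction on the message-passing step \(t\) with the backtracking array \(B(t)\) of \cref{eq:Bt} as a conservative over-approximation of the dependency pattern. In the \(\pd=false\) branch the clean invariants are \textbf{(i)} \(B(t)_{ijk}=0 \Rightarrow \partial h^{t}_{ij}/\partial \x_k = 0\), and its diagonal specialization \textbf{(ii)} \(B(t)_{ijj}=0\). The base case holds because \(L(G)\) forbids self-loops, so \(j\notin\near^{lg}(i,j)\) and \(h^0_{ij}=n_i\) is independent of \(\x_j\). For the step, \(h^{t+1}_{ij}=\psi(h^t_{ij}, \sum_{l}\phi(h^t_{ij}, h^t_{li}))\) introduces no fresh \(\x\)-dependence, so any nonzero \(\partial h^{t+1}_{ij}/\partial \x_k\) forces \(\partial h^t_{ij}/\partial\x_k\neq0\) or \(\partial h^t_{li}/\partial\x_k\neq0\) for some surviving neighbor \(l\); this is exactly the update of step 11, giving \textbf{(i)}. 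For \textbf{(ii)} the decisive point is that step 10 removes every in-edge \((l,i,j)\) with \(B(t)_{lij}=1\) \emph{before} the update, so all surviving \(l\in\near^{lg}(i,j)\) satisfy \(B(t)_{lij}=0\), whence \(B(t+1)_{ijj}=\max_{l}\{B(t)_{lij},B(t)_{ijj}\}=0\).

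The main obstacle I anticipate is the \(\pd=true\) branch, where \(B\) is \emph{not} a globally sound over-approximation: the difference embedding \(e_{ki}=\texttt{embed}(\x_k-\x_i,Z_k)\) makes \(h^0_{ij}\) depend on the relative coordinate \(\x_i\) of the first index, a dependence the initialization \(B(0)_{ijk}=\1[k\in\near^{lg}(i,j)]\) does not record. I would control this untracked set \(U^t_{ij}:=\{k:\partial h^t_{ij}/\partial\x_k\neq0,\, B(t)_{ijk}=0\}\) by showing it propagates like the dependency itself (\(U^{t+1}_{ij}\subseteq U^t_{ij}\cup\bigcup_l U^t_{li}\)) from the seed \(U^0_{ij}=\{i\}\), so that every untracked index is the tail of some backward message walk \(\cdots\to l\to i\to j\) from the readout node \((i,j)\). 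The target index \(j\) can appear there only if such a walk revisits \(j\), i.e. backtracks to its own starting node; the edge-removal rule precisely forbids this, which is the non-backtracking property proved in \cref{app:derivations}. Combining \(j\notin U^{T^{lg}}_{ij}\) with \(B(T^{lg})_{ijj}=0\) yields \(\partial h^{T^{lg}}_{ij}/\partial\x_j=0\) and completes the argument; the detailed index-chasing of this walk structure is exactly what I would defer to match the appendix.
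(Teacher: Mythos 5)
Your proposal follows the same overall route as the paper's proof in \cref{app:derivations}: read the readout as the transformer and the incident line-graph states \(\{h^{T^{lg}}_{ij}\}_{i\in\near(j)}\) as the conditioner, reduce the theorem to \(\partial h^{T^{lg}}_{ij}/\partial \x_j = 0\) together with \(\partial n_{ij}/\partial \x_k = 0\) for \(k\neq j\), and prove the former by induction on \(t\) via the backtracking array. Two differences are worth recording. First, your invariant is deliberately weaker: you carry only the soundness direction \(B(t)_{ijk}=0\Rightarrow \partial h^{t}_{ij}/\partial\x_k=0\) together with \(B(t)_{ijj}=0\), whereas the paper proves that \(B(t)\) characterizes the dependency set exactly; your version suffices for the theorem and avoids the paper's reverse implication, which tacitly assumes that independence of a composite from \(\x_k\) forces independence of each of its arguments. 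Second, and more substantively, your handling of the \(\pd=true\) branch is more careful than the paper's. The paper's base case asserts that \(B(0)_{ijk}=0\) implies \(h^0_{ij}\) is independent of \(\x_k\), but this fails at \(k=i\): since \(h^0_{ij}\) aggregates \(e_{ki}=\texttt{embed}(\x_k-\x_i,Z_k)\) over \(k\in\near^{lg}(i,j)\), it depends on \(\x_i\) while \(B(0)_{iji}=0\) (no self-loops), and this mismatch persists for all \(t\), so the paper's stated induction invariant is genuinely violated in that branch even though the theorem itself holds. Your untracked set \(U^t_{ij}\) is exactly the right repair, and the index-chasing you defer closes more simply than your walk sketch suggests: from \(U^0_{ij}=\{i\}\) and \(U^{t+1}_{ij}\subseteq U^t_{ij}\cup\bigcup_{l\in\near^{lg}(i,j)}U^t_{li}\) one gets \(U^{t+1}_{ij}\subseteq\{i\}\cup\near^{lg}(i,j)\); since \(\near^{lg}(i,j)\subseteq\{k: B(0)_{ijk}=1\}\subseteq\{k: B(t+1)_{ijk}=1\}\) by monotonicity of \(B\), and \(U^{t+1}_{ij}\) is disjoint from the tracked set by definition, it follows that \(U^t_{ij}\subseteq\{i\}\) for all \(t\), whence \(j\notin U^t_{ij}\) because \(i\neq j\) --- no appeal to non-backtracking walks is needed for the untracked part. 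So the proposal is correct, matches the paper's strategy, and on the pairwise-difference case is in fact tighter than the paper's own argument.
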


In \cref{app:Attention} we show how HoMP can be adapted to attention mechanisms as well.

\section{Computational Complexity}

\subsection{Graph Connectivity}
\label{sec:connectivity}
The number of backward passes through a vector field \(b\) parametrized by \cref{alg:hmp} that is required to compute its divergence is reduced by a factor of \(n\) compared to a standard GNN. However, the forward pass through $b$ is slower than through a standard GNN, as $b$ is operating on the line graph. This overhead will scale with the number of edges, which we denote \(\#E\) for \(G\) and \(\#E^{lg}\) for \(L(G)\). In \cref{tab:complexity} we show how the number of edges varies with graph types. For a fully connected graph, \(\#E^{lg}\) scales as \(n^3\) resulting in an impractical overhead in high-dimensional systems. To address this, we consider \(k\) nearest-neighbors (\(k\)NN) graphs instead which scale as \(\#E^{lg} = \Order(nk^2)\). Choosing \(k \leq \Order(\sqrt{n})\) will leave the cost of a forward-pass through $b$ comparable to a standard GNN with a fully connected graph. To test whether this restriction in connectivity results in a sufficiently expressive model, we empirically test different values for \(k\) in \cref{sec:exp}.

Using a $k$NN graph poses a locality assumption that might limit the model's ability to learn the equilibrium distribution of systems with long-range interaction (e.g., systems with coulomb interactions such as molecules). One possible strategy to circumvent this problem is described in \cref{app:add_res}.

\begin{table}[!ht]
\caption{Number of edges of \(L(G)\) and \(G\)}
\begin{center}

\begin{tabular}{ lll } 
 \toprule
  & fully connected & \(k\) neighbors \\ 
\midrule
 \(\#E\) & \(\Order(n^2) \)& \(\Order(nk) \) \\ 
 \(\#E^{lg}\) & \(\Order(n^3)\) &  \(\Order(nk^2)\) \\ 
\bottomrule
\end{tabular}
\end{center}
\label{tab:complexity}
\end{table}
\subsection{Divergence Calculation Speed-up}
We summarize our runtime estimations for one integration step during inference \(\text{RT}^{step}\), including likelihood calculations, for HollowFlow and a standard GNN based flow model in the following theorem:
\begin{theorem}
   Consider a GNN-based flow model with a fully connected graph \(G_{fc}\) and \(T\) message passing steps and a HollowFlow model with a \(k\) neighbors graph \(G_k\) and \(T^{lg}\) message passing steps. Let both graphs have \(n\) nodes and \(d\)-dimensional node features.
   
   The computational complexity of sampling from the fully connected GNN-based flow model including sample likelihoods is
\begin{align}
    \text{\upshape RT}^{step}(G_{fc}) &= \Order(Tn^3d) \label{eq:RTGstep}, 
\end{align}
while the complexity of the HollowFlow model for the same task is
\begin{align}
    \text{\upshape RT}^{step}(L(G_k)) &= \Order(n(T^{lg}k^2 + dk)) \label{eq:RTLGstep}.
\end{align}
Moreover, the speed-up of HollowFlow compared to the GNN-based flow model is
 \begin{align}
     \frac{\text{\upshape RT}^{step}(G_{fc})}{\text{\upshape RT}^{step}(L(G_k))} = \Order\left(\frac{Tn^2d}{T^{lg}k^2 + dk}\right)\label{eq:speedup}.
 \end{align}
 \label{th:complexities}
\end{theorem}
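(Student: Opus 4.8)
The plan is to split one integration step into two accountable pieces—a single forward evaluation of the vector field $b$ (needed to advance the ODE in \cref{eq:ode}) and the backward passes needed for the divergence $\nabla\cdot b$ in \cref{eq:rho1}—and to cost each under one uniform model. I would fix the convention that a single message passing step over a graph with $\#E$ edges costs $\Order(\#E)$, absorbing the per-edge feature-map cost (hidden width and the $d$-dependence of one MLP application) into the constant, and that by reverse-mode automatic differentiation one backward pass costs the same as one forward pass \cite{Griewank2008}. The edge counts I would import directly from \cref{tab:complexity}: $\#E=\Order(n^2)$, $\#E^{lg}=\Order(n^3)$ for $G_{fc}$, and $\#E=\Order(nk)$, $\#E^{lg}=\Order(nk^2)$ for $G_k$.

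For the fully connected model I would argue \cref{eq:RTGstep} as follows. One forward pass is $T$ message passing steps over $\Order(n^2)$ edges, costing $\Order(Tn^2)$. Lacking any hollow structure, the exact divergence requires isolating all $N=nd$ diagonal entries of $\mathbf{J}_b\in\R^{nd\times nd}$, hence $N=nd$ backward passes, each traversing the entire network at cost $\Order(Tn^2)$. Multiplying gives $\Order(nd\cdot Tn^2)=\Order(Tn^3d)$, which dominates the lone forward pass and establishes \cref{eq:RTGstep}.

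For HollowFlow I would establish \cref{eq:RTLGstep} by splitting $b$ into its conditioner (the line-graph message passing, \cref{eq:node_features_lg,eq:mp_lg}) and its transformer (the readout, \cref{eq:mp_readout_lg}), exactly as in \cref{th:hmp}. The conditioner is evaluated once at cost $\Order(T^{lg}\cdot nk^2)=\Order(nT^{lg}k^2)$. By the block-hollow/block-diagonal decomposition of \cref{th:hmp}, the divergence depends only on the block-diagonal part $\mathbf{J}_{\tau}$, whose $n$ diagonal $d\times d$ blocks are extracted with only $d$ backward passes: one backward pass per within-block coordinate $c$ yields, by block-diagonality, the entry $[\mathbf{J}_{\tau}]_{cc}$ of every block at once. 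Crucially, because $\del h_i/\del \x_i=0$, these passes need not traverse the conditioner—only the readout, which processes $\#E=\Order(nk)$ edges—so each costs $\Order(nk)$ and the $d$ of them cost $\Order(ndk)$. Adding the forward and backward contributions yields $\Order(nT^{lg}k^2+ndk)=\Order(n(T^{lg}k^2+dk))$, which is \cref{eq:RTLGstep}; the speed-up \cref{eq:speedup} is then the ratio of the two estimates after cancelling a common factor of $n$.

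The main obstacle, in my view, is not the algebra but pinning the cost model down so that it is applied consistently and honestly to both sides—in particular, justifying the asymmetry that the standard model's $nd$ backward passes each traverse the whole network whereas HollowFlow's $d$ backward passes traverse only the transformer. This asymmetry is exactly what \cref{th:hmp} buys, via $\del h_i/\del\x_i=0$ and the detachment of the conditioner in \cref{eq:mp_readout_lg}, and it is the entire source of the $n$-fold reduction, so I would isolate it as an explicit lemma rather than leave it implicit. A secondary point demanding care is the bookkeeping of the $d$-factor: treating the per-edge feature-map cost as $\Order(1)$ in $d$ while retaining $d$ as both the block size and the backward-pass count, so that the forward term remains $\Order(nT^{lg}k^2)$ and does not acquire a spurious factor of $d$.
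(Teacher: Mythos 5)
Your proposal is correct and follows essentially the same route as the paper's proof: cost each message passing step in proportion to the edge count, charge $nd$ full-network backward passes to the baseline versus $d$ transformer-only backward passes (each $\Order(\#E_k)=\Order(nk)$) to HollowFlow, and sum forward and backward contributions before taking the ratio. The extra care you flag about isolating the conditioner/transformer asymmetry and the $d$-factor bookkeeping matches what the paper does implicitly.
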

A derivation of these estimates can be found in \cref{app:complexities}.

\section{Experiments}
\label{sec:exp}
We test HollowFlow on two Lennard-Jones systems with 13 and 55 particles, respectively, (LJ13, LJ55). We also present results on Alanine dipeptide (ALA2) in Appendix~\ref{app:ala2}. 

To this end, we train a CNF (for training details and hyperparameters see \cref{app:experimental_details}) using the conditional flow matching loss \cite{cfm2023} to sample from the equilibrium distribution of these systems. We parametrize the vector field of our CNF using the \(O(3)\)-equivariant PaiNN architecture \cite{PaiNN}. We additionally use minibatch optimal transport \cite{tong2024improving,PooladianBDALC23} and equivariant optimal transport \cite{equivfm}. For each system we train multiple different models:
\begin{enumerate}
    \item[] {\bf Baseline:} $O(3)$-equivariant GNN on a fully connected graph with 3--7  message passing steps.
    \item[] {\bf HollowFlow}: $O(3)$-equivariant HollowFlow on a \(k\)NN graph with different values for \(k\) with two message passing steps.
\end{enumerate}

In the case of LJ13, we systematically test all values of \(k\) from \(2\) to \(12\) in steps of \(2\) to gain insight into how \(k\) affects the performance and runtime of the model. Based on our findings and the theoretical scaling laws, we pick \(k \in \{7,27\}\) for LJ55.

Motivated by the symmetry of pairwise interactions between atoms, all \(k\)NN graphs are symmetrized, i.e., whenever the edge \((i,j)\) is included in the \(k\)NN graph the edge \((j,i)\) is included as well, even if it is not naturally part of the \(k\)NN graph.

All training details and hyperparameters are reported in \cref{app:experimental_details}.

\paragraph{Criteria to Assess Effectiveness} 

We measure the sampling efficiency through the Kish effective samples size (ESS) \cite{kish}. As the ESS is highly sensitive to outliers, we remove  a right and left percentile of the log importance weights, \(\log w_i\), to get a more robust, yet biased, metric, ESS$_{rem}$ (see \cref{app:weight_clipping}) \cite{Moqvist2025}. As HoMP is, in general, a restriction compared to normal message passing, we expect the ESS to be lower. However, as long as the speed-up is sufficiently large, our method can still be an overall improvement. 

To gauge the improvement we introduce a compute normalized, relative ESS, i.e., the number of effective samples per GPU second and GPU usage during inference (see \cref{eq:rel_ESS}). This gives us  fairer grounds for comparison of HollowFlow against our baseline, a standard $O(3)$-equivariant CNF. Further, by dividing the relative ESS for all hollow architectures by the baseline relative ESS, we get the effective speed-up (EffSU) and correspondingly the EffSU$_{rem}$ calculated from the relative ESS$_{rem}$ in the same way. To ensure a fair comparison, we ran all inference tasks on the same type of GPU for every system, respectively. Details can be found in \cref{app:experimental_details}.  

The results for LJ13 and LJ55 are listed in \cref{tab:LJ13,tab:LJ55}. For both systems we observe that the ESS and the ESS\(_{rem}\) are highest for the baseline and significantly smaller for HollowFlow. However, the relative ESS$_{rem}$ is significantly higher for all HollowFlow models. This is reflected in the effective speed-up EffSU$_{rem}$, gauging how many times more samples we can generate per compute compared to our non-hollow baseline. While the maximal EffSU$_{rem}$ that we observe for LJ13 is about {\(3.3\)} for \(k=\) {\(6\)}, for LJ55 we observe a maximal EffSU$_{rem}$ of about {\(93.7\)} for \(k=\) {\(7\)}. This shows that HollowFlow can reduce the compute required for sampling by about two orders of magnitude on our larger test system.

In \cref{fig:LJ13} we provide a detailed analysis of the runtime of different parts of models trained on LJ13 during sampling. \Cref{fig:runtime} demonstrates that the vast majority of the compute in the baseline model is spent on the divergence calculations while only a tiny fraction is spent on the forward pass. \cref{fig:speedup} shows that exactly the opposite is true for HollowFlow, resulting in a decrease of runtime by a factor of up to \(\sim15\), depending on \(k\). 

\begin{align}
    \text{ relative ESS} = \frac{\text{ESS} \times \text{\#samples}}{\text{RT}\times\text{(GPU usage)}}, \quad \text{EffSU}(\cdot) = \frac{\text{ relative ESS}(\cdot)}{{\text{relative ESS}}(\text{baseline})}
    \label{eq:rel_ESS}
\end{align}

\begin{figure}[ht!]
     \centering
     \begin{subfigure}[b]{0.45\textwidth}
         \centering
         \includegraphics[width=\textwidth]{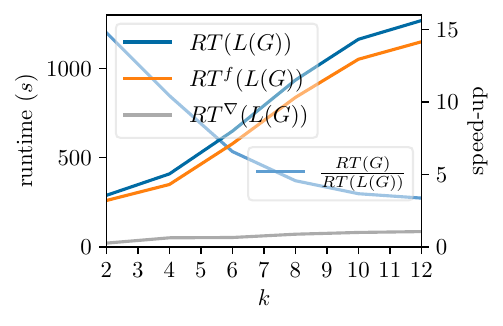}
         \caption{}
         \label{fig:speedup}
     \end{subfigure}%
     \hfill
     \begin{subfigure}[b]{0.45\textwidth}
         \centering
         \includegraphics[width=\textwidth]{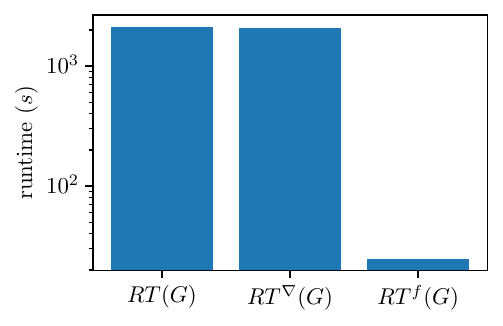}
         \caption{}
         \label{fig:runtime}
    \end{subfigure}%
    \hfill

     \caption{ {\bf Runtimes of HollowFlow and baseline for generating \(10^5\) LJ13 equilibrium samples.} All runtimes are multiplied by the GPU usage. \subref{fig:speedup} Total runtime \(\text{RT}(L(G))\) and runtime of the forward passes $\text{RT}^f(L(G))$ and backward passes \(\text{RT}^{\nabla}(L(G))\) of HollowFlow. In light blue: speed-up (ratio of the runtimes of the fully connected baseline \(\text{RT}(G)\)) and HollowFlow \(\). \subref{fig:runtime} Total runtime and runtime of the forward and backward passes for the non-hollow fully connected baseline. The forward and backward runtimes of the baseline were extrapolated from other runs, see \cref{app:runtimes} While the total runtime of the baseline is massively dominated by the backward pass, the opposite is true for HollowFlow.}
    \label{fig:LJ13}
\end{figure}

\begin{table}[!ht]
    \centering
    \renewcommand{\arraystretch}{1.2}
    \caption{Comparison of HollowFlow trained using different \(k\)NN graphs and the non-hollow fully connected baseline, all trained on LJ13. The two rightmost columns show the effective speed-up compared to the baseline \cref{eq:rel_ESS} with and without weight clipping. Bold  indicates best performance.}
\begin{tabular}{lllll}
\toprule
 & ESS $(\%)$ & ESS\(_{rem} \ (\uparrow)\) $(\%)$ & EffSU\(_{rem}\ (\uparrow)\) & EffSU \\
\midrule
$k=2$ & $0.054_{0.039}^{0.068}$ & $2.92_{2.89}^{2.95}$ & $1.059_{1.049}^{1.070}$ & $1.16_{0.32}^{1.96}$ \\
$k=4$ & $0.041_{0.001}^{0.107}$ & $10.34_{10.28}^{10.40}$ & $2.649_{2.631}^{2.667}$ & $0.64_{0.01}^{1.62}$ \\
$k=6$ & $3.300_{2.410}^{4.301}$ & $20.20_{20.11}^{20.29}$ & $\mathbf{3.260}_{3.243}^{3.278}$ & $31.87_{7.77}^{53.80}$ \\
$k=8$ & $2.745_{2.250}^{3.236}$ & $20.64_{20.55}^{20.72}$ & $2.310_{2.297}^{2.323}$ & $18.26_{5.65}^{31.12}$ \\
$k=10$ & $1.057_{0.803}^{1.301}$ & $16.50_{16.41}^{16.58}$ & $1.484_{1.475}^{1.492}$ & $5.51_{1.61}^{9.19}$ \\
$k=12$ & $4.069_{3.260}^{4.926}$ & $19.72_{19.63}^{19.80}$ & $1.627_{1.619}^{1.636}$ & $20.11_{5.66}^{35.55}$ \\
baseline & $2.132_{0.417}^{2.231}$ & $\mathbf{40.73}_{40.60}^{40.87}$ & $1$ & $1$ \\
\bottomrule
\end{tabular}
    \label{tab:LJ13}
\end{table}

\begin{table}[!ht]
    \centering
    \renewcommand{\arraystretch}{1.2}
    \caption{Comparison of HollowFlow trained using different \(k\)NN graphs and the non-hollow fully connected baseline, all trained on LJ55. The two rightmost columns show the effective speed-up compared to the baseline \cref{eq:rel_ESS} with and without weight clipping. Bold  indicates best performance.}
\begin{tabular}{lllll}
\toprule
 & ESS $(\%)$& ESS\(_{rem} \ (\uparrow)\) $(\%)$& EffSU\(_{rem}\ (\uparrow)\) & EffSU \\
\midrule
$k=7$ & $0.006_{0.003}^{0.010}$ & $0.53_{0.51}^{0.56}$ & $\mathbf{93.737}_{88.484}^{99.071}$ & $82.26_{31.04}^{136.56}$ \\
$k=27$ & $0.007_{0.003}^{0.012}$ & $0.64_{0.61}^{0.68}$ & $9.466_{8.965}^{9.986}$ & $8.46_{3.31}^{14.90}$ \\
$k=55$ & $0.020_{0.014}^{0.025}$ & $0.74_{0.71}^{0.77}$ & $4.365_{4.144}^{4.583}$ & $9.11_{4.57}^{13.18}$ \\
baseline & $0.048_{0.029}^{0.051}$ & $\mathbf{2.96}_{2.89}^{3.03}$ & $1$ & $1$ \\
\bottomrule
\end{tabular}
\label{tab:LJ55}
\end{table}

\section{Related Works}

\paragraph{Efficient Sample Likelihoods in Probability Flow Generative Models}
Evaluating the change in probability associated with probability flow generative models is subject to research but broadly fall into two different categories: 1) approximate estimator by invoking stochastic calculus \cite{2412.17762}
or trace estimators \cite{Hutchinson1990,1802.03451, grathwohl2018scalable} which are efficient, but prone to high variance 2) NFs with structured Jacobians \cite{1605.08803,1705.07057,kingma2018glow,midgley2023se,NEURIPS2023_a598c367}, possibly combined with annealing strategies \cite{midgley2023flow, 2410.02711, 2503.02819,2502.18462} which offer efficiency, but limited expressivity or an inability to exploit certain data symmetries such as permutation equivariance. 

\paragraph{Boltzmann Emulators and Other Surrogates}
Several works forego quantitative alignment with a physical potential energy function \(u(\x)\) in favor of scaling to larger systems through coarse-graining or enabling faster sampling. These qualitative and semi-quantitative Boltzmann surrogates are often referred to as Boltzmann Emulators \cite{2206.01729}. Some notable examples include, SMA-MD \cite{VigueraDiez2024}, 
BioEmu \cite{Lewis2024}, AlphaFlow \cite{2402.04845}, JAMUN \cite{2410.14621}. and OpenComplex2 \cite{oc2}. Other approaches, instead aim to mimic MD simulations with large time-steps including ITO \cite{ito,diez2025boltzmann,2510.07589}, TimeWarp \cite{NEURIPS2023_a598c367},  ScoreDynamics \cite{Hsu2024} and related approaches \cite{2503.23794, petersen2024dynamicsdiffusion, 2409.17808, Vlachas2021,2508.03709}. 

\paragraph{Non-backtracking Graphs}
 Non-backtracking graphs are applied in a variety of classical algorithms, e.g., \cite{kempton2016nonbacktrackingrandomwalksweighted,newman2013spectralcommunitydetectionsparse}. NoBGNNs that are non-backtracking for only one message passing step have been applied in a range of other context including for community detection \cite{lg_community}, as way to reduce over-squashing \cite{nonbacktracking} and alleviate redundancy in message passing \cite{RedundancyfreeGNN}. However, nobody has yet exploited these methods to construct a NoBGNN that is non-backtracking for an arbitrary number of message passing steps to give the Jacobian a structure that allows for cheap divergence calculations.

\section{Limitations}
\label{sec:limitations}
We demonstrate good theoretical scaling properties of HollowFlow to large systems and support this experimentally on systems of up to 165 dimensions. However, the \(k\)NN graph used in the current HollowFlow implementation involves an assumption of locality, which may break down in molecular and particle systems with long-range interactions, such as electrostatics. Future work involves exploring strategies to incorporate such long-range interactions, which may involve other strategies to construct the graph used in HollowFlow, Ewald-summation-based methods \cite{kosmala_ewaldbased_2023,Cheng2025},
 or linear/fast attention \cite{2412.08541, 2502.13797}. Also, the fact that some edges in the line graph underlying HollowFlow need to be removed after each message passing step might be problematic at scale. Despite that, there is still a wide range of physical systems where long-range interactions are effectively shielded, such as metals and covalent solids, which might enjoy the properties of HollowFlow models even without these technical advances. 
 
 Overall, the performance of our baseline is not in line with state-of-the-art in terms of ESS for the studied systems. However, since our HollowFlow models build directly on the baseline we anticipate any improvement in the baseline to translate into the HollowFlow models as well. 
\section{Conclusions}
We introduced HollowFlow, a flow-based generative model leveraging a novel NoBGNN based on Hollow Message Passing (HoMP), enabling sample likelihood evaluations with a constant number of backward passes independent of system size. We demonstrated theoretically that we can achieve sampling speed-ups of up to \(\Order(n^2)\), if the underlying graph \(G\) of HollowFlow is a $k$NN graph. By training a Boltzmann generator on two different systems, we found significant performance gains in terms of compute time for sampling and likelihood evaluation, the largest system enjoying a \(10^2 \times\) speed-up in line with our theoretical predictions. Since HoMP can be adopted into any message-passing-based architecture \cite{mpalltheway}, our work can pave the way to dramatically boost the efficiency of expressive CNF-based Boltzmann Generators for high-dimensional systems.

\section*{Acknowledgements}
This work was partially supported by the Wallenberg AI, Autonomous Systems and Software Program (WASP) funded by the Knut and Alice Wallenberg Foundation. Results were enabled by resources provided by the National Academic Infrastructure for Supercomputing in Sweden (NAISS) at Alvis (projects: NAISS 2024/22-688 and NAISS 2025/22-841), partially funded by the Swedish Research Council through grant agreement no. 2022-06725. The authors thank Ross Irwin and Selma Moqvist and other team members of the AIMLeNS lab at Chalmers University of Technology for discussions, feedback and comments.

\printbibliography
\clearpage
\newpage
\appendix

\section{Adaptation of HollowFlow to Attention}
\label{app:Attention}
As illustrated in \cite{bronstein2021geometricdeeplearninggrids}, attention can be considered a special case of message passing neural networks. Thus, the adaptation is straightforward, we outline two possible strategies in this section.  

First, we replace the general message function \(\phi\) (\cref{eq:mp_lg}) by a product of the importance coefficients from a self-attention mechanism \(a: \R^{n_h} \times \R^{n_h} \rightarrow \R\) and a learnable function \(\tilde\phi: \R^{n_h} \rightarrow \R^{n_h}\):
\begin{align}
    m^t_{kij} = a(h^{t}_{ij}, h^{t}_{ki})\tilde{\phi}(h^{t}_{ij}), \quad m^t_{ij} = \sum_{k \in \near^{lg}(i,j)} m^t_{kij}, \quad
    h^{t+1}_{ij} = \psi(h^{t}_{ij}, m^t_{ij}).
\end{align}
\(\near^{lg}(i,j)\) is defined in \cref{eq:nn_lg}.

Second, for graph-attention-like mechanism \cite{2018graphattentionnetworks}, we define the line graph softmax function for a graph \(G^q\) with the corresponding  notion of nearest neighbors \(\near^{lg, q}(i,j)\): 
\begin{align}
    \text{softmax}^q_{ij}(y_{ijk}) = \frac{\exp\left(y_{ijk}\right)}{\sum_{l \in \near^{lg, q}(i,j)}\exp\left(y_{ijl}\right)}.
\end{align}
The updates of HollowFlow then become
\begin{align}
    \alpha^{t,q}_{kij} &= \text{softmax}^q_{ij}(a(h^{t}_{ij}, h^{t}_{ki})) \label{eq:Graph_attention1}\\     h_{ij}^{t+1} &= \sigma\left(\sum_{k \in \near^{lg, q}(i,j)} \alpha^{t,q}_{kij} \W_{ij} h^{t}_{ki}\right), 
    \label{eq:Graph_attention}
\end{align}
where \(\W_{ij} \in \R^{n_h \times n_h}\) is a learnable weight matrix, \(\sigma: \R \rightarrow \R\) is a nonlinearity (applied element wise) and \(a: \R^{n_h} \times \R^{n_h} \rightarrow \R\) is a non-linear self-attention function, typically a feed-forward neural network. One can also generalize to multi-head attention by independently executing \cref{eq:Graph_attention1,eq:Graph_attention}, possibly even on differently connected graphs \(\{G_q\}_{q=1}^H\) potentially circumventing expressiveness problems, and then concatenating the result:
\begin{align}
\alpha^{t,q}_{kij} &= \text{softmax}^q_{ij}(a(\W^q h^{t}_{ij}, \W^q h^{t}_{ki})) \\
    h_{ij}^{t+1} &= \bigparallel_{q=1}^H \sigma\left(\sum_{k \in \near^{lg, q}(i,j)} \alpha^{t, q}_{kij}\W_{ij}^q h^{t}_{ki}\right).
    \label{eq:Multihead_graph_attention}
\end{align}
Here, \(\W^q\) and \(\W_{ij}^q \in \R^{n_h \times n_h \times H}\) are learnable weight matrices and \( H \) is the number of attention heads.

These formulations demonstrate that both self-attention and graph-attention layers can be integrated into HollowFlow without altering its block-hollow Jacobian structure, enabling efficient divergence computations within attention-based architectures.

\section{Derivations and Proofs}
\label{app:derivations}
\subsection{Hollowness of HollowFlow}
We first reintroduce some notation and do then restate \cref{alg:hmp} and \cref{th:hmp} and proof it.
\paragraph{Notational remarks} By \(\near^{lg}(i,j)\) and \(\near(i)\) we mean the following notions of nearest neighbors:
\begin{align}
    \near^{lg}(i,j) = \{k|(k,i,j)\in E^{lg}\}    \label{eq:nn_lg}
\\
    \near(i) = \{k|(k,i) \in E\}.
\end{align}
\Cref{eq:nn_lg} is illustrated in \cref{fig:LG_N}. The functions \(\phi,\tilde{\phi}, \psi, \tilde{\psi}\) and $R$ used in \cref{alg:hmp} are all learnable functions. $B(t) \in \R^{n\times n \times n}$ is the three-dimensional time-dependent back tracking array (see \cref{eq:Bt} and \cref{alg:hmp}), the integer $t$ indicates the current message passing step. The binary variable $\text{pd} \in \{true, false\}$ indicated whether the algorithms runs on the (translation invariant) pairwise differences or not. The $\texttt{detach}()$ method means that derivatives w.r.t. the detached variable are ignored (i.e., equal to zero).
\begin{figure}[H]
     \centering
     \includegraphics[width=0.50\textwidth]{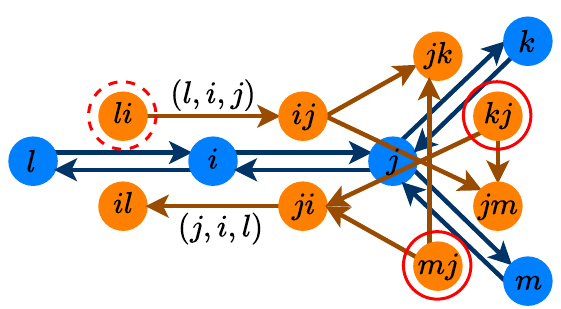}
     \caption{{\bf Schematic example of the graph \(G\) (in blue) and its line graph \(L(G)\) (in orange)}: The line graph nodes encircled with a continuos red line correspond to \(\near^{lg}(j,i)\) while the node encircled with a dashed red line corresponds to \(\near^{lg}(i,j)\). Two edges of \(L(G)\) are labelled explicitly to illustrate our labelling convention.}
     \label{fig:LG_N}
\end{figure}
\addtocounter{algorithm}{-1}
\begin{algorithm}[H]
\caption{Hollow message passing}
\begin{algorithmic}[1]
\Require \(\{\x_i, Z_i\}_{i=1}^n \text{ where } \ \x_i \in \R^d \text{ and } Z_i \in \R, \quad \pd \in \{true, false\}\) 
\Ensure \(\{b_i\}_{i=1}^n, \ b_i \in \R^d\)
\If{\pd}
    \State \(d_{ij} = \x_i - \x_j\), \quad \(e_{ij} = \texttt{embed}(d_{ij}, Z_i)\)
\Else
    \State  \(n_i = \texttt{embed}(\x_i, Z_i)\)
\EndIf
\State Calculate \(G = (N,E)\), e.g., as a \(k\)NN graph. Self-loops, i.e., \((i,i) \in E\) are prohibited.
\State{Calculate the line graph \(L(G) = (N^{lg}, E^{lg})\) and its node features \(n^{lg}_{ij}\):}
\Statex \(N^{lg} = E\), \ \(E^{lg} = \{(i,j,k)| (i,j) \in E, (j,k) \in E \text{ and } i \neq k\}\), \quad \(n^{lg}_{ij} = \begin{cases}
    e_{ij} \ \text{if } \pd\\
    n_i \  \text{else}
\end{cases}\)  

\State{Calculate initial node features \(h^0_{ij}\) and initial backtracking array \(B(0)_{ijk}\):} 
\Statex\( h^0_{ij} =\begin{cases}
    \tilde{\psi}(\ \ \sum\limits_{\mathclap{k \in \near^{lg}(i,j)}} \ \ \tilde{\phi}(n^{lg}_{ki})) \ \text{if } \pd\\[.5cm]
    n^{lg}_{ij} \ \text{else}
\end{cases}\), \ \(B(0)_{ijk} =  
    \begin{cases}
    (1 \ \text{if} \ k \in \near^{lg}(i,j),\ 0 \ \text{else}) \text{ if } \pd \\[.5cm]
    (1 \  \text{if} \ k = i, \ 0 \  \text{else})\text{ else } 
    \end{cases}\)
\Statex\For{\(t \gets 0\) to \((T^{lg}-1)\)}
    \State Remove appropriate edges of \(L(G)\): \(E^{lg}  \gets E^{lg} \setminus \{(i,j,k) \in E^{lg}| B(t)_{ijk} = 1\}\) 
    \State Update backtracking array: \(B(t + 1)_{ijk} = \max_{l \in \near^{lg}(i,j)}\{ B(t)_{lik}, B(t)_{ijk}\}\)
    \State Do message passing: 
    \Statex{\hspace{.55cm}\(m^t_{kij} = \phi(h^{t}_{ij},h^t_{ki}), \quad m^t_{ij} = \sum_{k \in \near^{lg}(i,j)} m^t_{kij}, \quad
    h^{t+1}_{ij} = \psi(h^{t}_{ij}, m^t_{ij})\)}
\EndFor
\State Perform readout and project back to \(G\):
\Statex \(b_j = \sum_{i \in \near(j)}R(h_{ij}^{T^{lg}}, n_{ij}), \quad \text{where } n_{ij} = \begin{cases}
    \texttt{embed}(\x_i.\texttt{detach()} - \x_j, Z_i) \quad \text{if } \pd\\
    n_{j} \quad \text{else}
\end{cases}\)
\end{algorithmic}
\end{algorithm}

\renewcommand{\thetheorem}{\ref{th:hmp}}
\begin{theorem}[Block-hollowness of HoMP]
\Cref{alg:hmp} defines a function \(b: \R^{dn} \rightarrow \R^{dn}\) whose Jacobian $\mathbf{J}_{b(\mathbf{x})} \in \R^{dn \times dn}$ can be split into a block-hollow and a block-diagonal part, i.e.,
\begin{align}
    \mathbf{J}_{b(\mathbf{x})} = \mathbf{J}_{c(\mathbf{x})}+ \mathbf{J}_{\tau(\mathbf{x})}
\end{align}
where \(  \mathbf{J}_{c(\mathbf{x})} \) is block-hollow, while \(\mathbf{J}_{\tau(\mathbf{x})}\) is block-diagonal with block size \(d\times d\), respectively. This structure of the Jacobian enables the exact computation of differential operators that only include diagonal terms of the Jacobian with \(d\) instead of \(nd\) backward passes through the network.
\addtocounter{theorem}{-1}
\end{theorem}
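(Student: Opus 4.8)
The plan is to mirror the HollowNet chain-rule decomposition of \cref{sec:hollowmp}, reading the final line-graph hidden states $\{h^{T^{lg}}_{ij}\}$ as the permutation-equivariant conditioner and the readout $R$ together with its explicit node input $n_{ij}$ as the transformer. Writing $b_j = \sum_{i\in\near(j)} R(h^{T^{lg}}_{ij}, n_{ij})$ and differentiating with the chain rule, every block $\del b_j/\del\x_k$ splits into a term that differentiates through the direct argument $n_{ij}$ and a term that differentiates through the conditioner $h^{T^{lg}}_{ij}$. Collecting the first terms into $\mathbf{J}_{\tau(\x)}$ and the second into $\mathbf{J}_{c(\x)}$ gives the claimed additive split $\mathbf{J}_{b(\x)} = \mathbf{J}_{c(\x)} + \mathbf{J}_{\tau(\x)}$; it then remains to show the first summand is block-diagonal and the second block-hollow.

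First I would dispatch block-diagonality of $\mathbf{J}_{\tau(\x)}$. The only explicit (non-conditioner) dependence of $b_j$ on the coordinates is through $n_{ij}$, which equals $n_j$ in the non-pd branch and $\texttt{embed}(\x_i.\texttt{detach()}-\x_j, Z_i)$ in the pd branch; in both cases the $\texttt{detach}$ (respectively the definition of $n_j$) makes $n_{ij}$ a function of $\x_j$ alone. Hence the transformer contribution to $\del b_j/\del\x_k$ vanishes whenever $k\neq j$, so $\mathbf{J}_{\tau(\x)}$ carries nonzero entries only in its $d\times d$ diagonal blocks.

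The core of the argument is block-hollowness of $\mathbf{J}_{c(\x)}$, which is equivalent to $\del h^{T^{lg}}_{ij}/\del\x_j = 0$ for every readout edge $(i,j)\in E$ --- the non-backtracking condition. I would prove this through the backtracking array by establishing, by induction on the message-passing step $t$, the joint invariant: (i) $B(t)$ is a sound over-approximation of the Jacobian sparsity pattern, i.e. $\del h^{t}_{ij}/\del\x_k \neq 0 \Rightarrow B(t)_{ijk}=1$, so that $B(t)_{ijk}=0$ certifies no dependence in the sense of \cref{eq:Bt}; and (ii) $B(t)_{ijj}=0$ for all edges $(i,j)$. The base case checks the two initializations of $B(0)$ against $h^0$. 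For the inductive step, the edge-removal line deletes precisely the line-graph edges $(l,i,j)$ with $B(t)_{lij}=1$, i.e. exactly those that would carry $\x_j$-dependence into $h^{t+1}_{ij}$; the update $B(t{+}1)_{ijk}=\max_{l\in\near^{lg}(i,j)}\{B(t)_{lik},B(t)_{ijk}\}$ then matches the chain-rule propagation of support through the message passing on the pruned graph, so (i) is preserved, while (ii) follows because after removal every remaining neighbor $l$ of $(i,j)$ has $B(t)_{lij}=0$ and $B(t)_{ijj}=0$ by hypothesis. Evaluating at $t=T^{lg}$ gives $B(T^{lg})_{ijj}=0$, hence $\del h^{T^{lg}}_{ij}/\del\x_j=0$, and $\mathbf{J}_{c(\x)}$ is block-hollow. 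The computational claim is then immediate: since every nonzero diagonal block of $\mathbf{J}_{b(\x)}$ originates from $\mathbf{J}_{\tau(\x)}$, rewiring the readout path and seeding the backward pass with the $d$ one-hot directions within a single block recovers the full block-diagonal, and thus the divergence, in $d$ backward passes independent of $n$.

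I expect the main obstacle to be the inductive soundness of $B$ in step (i): the index bookkeeping that keeps the removal rule, indexed by line-graph edges $(i,j,k)$, synchronized with the chain-rule support of the hidden states, indexed by a node $(i,j)$ and a differentiation target $k$; and the pd initialization, where $h^0_{ij}$ is built from relative features $e_{ki}=\texttt{embed}(\x_k-\x_i,\cdot)$ that also carry source-node dependence. I would verify carefully that $B(0)$ together with the subsequent prunings still yields a sound over-approximation of every dependence that can return to a node through a closed non-backtracking walk, which is exactly what guarantees invariant (ii) survives to $t=T^{lg}$.
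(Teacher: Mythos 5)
Your proposal follows essentially the same route as the paper's own proof: the identical chain-rule split of \(\del b_j/\del \x_k\) into a transformer term (rendered block-diagonal by the \(\texttt{detach()}\), respectively the \(n_j\)-only dependence of \(n_{ij}\)) and a conditioner term, then induction on the message-passing step \(t\) via the backtracking array \(B(t)\) with separate base cases for the two \(\pd\) branches, and finally the same \(d\) one-hot probe vectors to recover the diagonal blocks. The only difference is that you carry the one-sided invariant (\(B(t)\) soundly over-approximates the dependency pattern, plus \(B(t)_{ijj}=0\)) where the paper proves a two-sided ``\(B(t)_{ijk}=0\) iff \(h^t_{ij}\) is independent of \(\x_k\)'' characterization; your weaker invariant suffices, and the source-node dependence of \(h^0_{ij}\) in the \(\pd\) branch that you flag is indeed the one place where soundness over all \(k\) needs care (it concerns only \(k=i\) and never propagates into the critical \(k=j\) slot, since \((l,i,j)\in E^{lg}\) forces \(l\neq j\)).
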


\textit{Remark:} \Cref{fig:LG_N} might be a helpful illustration to keep track of the indices when reading the proof.

\begin{proof}
By the chain rule step 14 gives
\begin{align}
    \frac{d b_j}{d \x_k} = \sum_{i \in \near (j)} \left( \frac{\del R}{\del h^{T^{lg}}_{ij}} \frac{\del h^{T^{lg}}_{ij}}{ \del \x_k} + \frac{\del R}{\del n_{ij}} \frac{\del n_{ij}}{\del \x_k}\right).
    \label{eq:decomposition}
\end{align}
If we show that \(\frac{\del }{\del \x_j} h^{t}_{ij} = 0 \ \forall t \geq 0\) and \(\left(\frac{\del n_{ij}}{\del \x_k} = 0  \text{ if }j \neq k\right)\) for all \((i,j) \in E\) and for all $k \in \{1, ..., n\}$ the Jacobian decomposes as stated. We split the proof into two cases: 
\paragraph{Case 1: \(\text{pd} = false\):}\mbox{}\\

That $\frac{\del n_{ij}}{\del \x_k} = 0  \text{ if }j \neq k$ is trivial for all $k \in \{1, ..., n\}$ as \(n_{ij} = \texttt{embed}(\x_j, Z_j)\).

We show \(\frac{\del }{\del \x_j} h^{t}_{ij} = 0  \ \) for all \(t \geq 0\) and for all \((i,j) \in E\) by induction on \(t\):
\paragraph{Base case:} 
\begin{enumerate}
\item \(h^0_{ij} = \texttt{embed}(\x_i, Z_i)\) is by definition independent of \(\x_j\) for all \((i,j) \in E\). 
\item Furthermore, \(B(0)_{ijk}\) is by definition one exactly when \(h^0_{ij}\) depends on \(\x_k\) and zero otherwise (see step 8).
\end{enumerate}
\paragraph{Induction step:}
Assume 
\begin{enumerate}
    \item \(h^t_{ij}\) is independent of \(\mathbf{x}_j\) for all \((i,j) \in E\) and
    \item \(B(t)_{ijk}\) is zero exactly when \(h^t_{ij}\) is independent of \(\mathbf{x}_k\).
\end{enumerate}

We want to show that these conditions also hold for \(t+1\). 

To see that \(h^{t+1}_{ij}\) is independent of \(\mathbf{x}_j\) for all \((i,j) \in E\), by step 12 of \cref{alg:hmp} and the first induction hypothesis, we only need to show that \(h^t_{ki}\) does not depend on \(\mathbf{x}_j\) for all \(k \in \near^{lg}(i,j)\). Assume \(h^t_{ki}\) does depend on \(\mathbf{x}_j\). By the second induction hypothesis, this implies \(B(t)_{kij} = 1\) and by step 10 of \cref{alg:hmp} thus \((k,i,j) \notin E^{lg}\). This means \(k \notin \near^{lg}(i,j)\), a contradiction.

Next, we prove (\(B(t+1)_{ijk} = 0 \text{ if and only if } h^{t+1}_{ij} \text{ is independent of } \x_k\)) by proving both implications:\\

\(B(t+1)_{ijk} = 0 \implies h^{t+1}_{ij} \text{ is independent of } \x_k\):\\
Assume \(B_{ijk}(t+1) = 0\). By step 11 of \cref{alg:hmp} this implies \(B(t)_{lik} = 0 \ \forall \ l \in \near^{lg}(i,j)\) and \(B(t)_{ijk} = 0\). By the second induction hypothesis it follows that \(h^t_{li}\) and \(h^t_{ij}\) are independent of \(\x_k\) for all \(l \in \near^{lg}(i,j)\) which by step 12 of \cref{alg:hmp} implies that \(h^{t+1}_{ij}\) is independent of \(\x_k\).

\(h^{t+1}_{ij}  \text{ is independent of } \x_k \implies B(t+1)_{ijk} = 0 \):\\
Assume \(h^{t+1}_{ij}\) is independent of \(\x_k\). By step 12 of \cref{alg:hmp} this implies that \(h^{t}_{ij}\) and \(h^{t}_{li}\) are independent of \(\x_k\) for all \(l \in \near^{lg}(i,j)\). By the second induction hypothesis and step 11 of \cref{alg:hmp} we get \(B(t+1)_{ijk} = 0\) as desired. 

\paragraph{Case 2: \(\text{pd} = true\):}\mbox{} \\

That $\frac{\del n_{ij}}{\del \x_k} = 0  \text{ if }j \neq k$ is trivial for all $k \in \{1, ..., n\}$ as \(n_{ij} = \texttt{embed}(\x_i.\texttt{detach()} - \x_j, Z_i))\) and the $\texttt{detach}()$ method means that we ignore derivatives w.r.t. the detached variable.

We again show  \(\frac{\del }{\del \x_j} h^{t}_{ij} = 0  \ \) for all \(t \geq 0\) and for all \((i,j) \in E\) by induction on \(t\). The only difference compared to case 1 is the base case, as \(h_{ij}^0\) is initialized differently.

\paragraph{Base case:}
\begin{enumerate}
    \item \(h_{ij}^0\) is a function of \(n^{lg}_{ki}\) where \(k \in \near^{lg}(i,j)\). As \(k \neq j\) and \(i \neq j\), \(h_{ij}^0\) is independent of \(\x_j\).
    \item We need to show \(B(0)_{ijk}\) is zero exactly when \(h^0_{ij}\) is independent of \(\x_k\). Assume \(B(0)_{ijk}=0\). Then \(k \notin \near^{lg}(i,j)\) by step 8. Thus, by definition of \(h_{ij}^0\) (step 8), \(h_{ij}^0\) is independent of \(\x_k\). Conversely, let \(h_{ij}^0\) be independent of \(\x_k\).  Then \(k \notin \near^{lg}(i,j)\) meaning \(B(0)_{ijk} = 0\) by step 8.
\end{enumerate}
\paragraph{Induction step:}
The induction step is the same as in case 1.

\paragraph{Jacobian diagonal with \(d\) backward passes:}
We can now exploit the decomposition \cref{eq:decomposition} of the Jacobian to get all its diagonal terms with only \(d\) vector-Jacobian products (backward passes). This can be achieved by applying the \(\texttt{detach()}\) operation to \(h^{T^{lg}}_{ij}\) in step 14 of \cref{alg:hmp} such that the block-diagonal part of the decomposition \cref{eq:decomposition} remains unchanged while the hollow part vanishes. If we construct \(d\) column vectors \(\{v_i \in \R^{dn}\}_{i=1}^{d}\) as
\begin{align}
    (v_{i})_j =
    \begin{cases}
    1 \quad \text{if} \ j = i + d(k-1),\ k \in \toD{n}\\
    0 \quad \text{else,}
    \end{cases}
\end{align}
The vector-Jacobian products \(\{\mathbf{J}_b v_i\}_{i=1}^{d}\)
do then give us access to all diagonal terms of \(\mathbf{J}_b\), e.g., \(\text{tr}(\mathbf{J}_b) = \sum_{i=1}^d v_i^T \mathbf{J}_b v_i\) (see \cref{fig:overview} for an illustration in the case \(n=5, d = 3\)).
\end{proof}

\paragraph{Further remarks}
The conditioner $c(\mathbf{x})$ and the transformer $\tau(h,\mathbf{x})$ shown in \cref{fig:overview} can explicitly be written component-wise as follows:
\begin{align}
    c_j(\mathbf{x}) &= \{h^{T^{lg}}_{ij}\}_{i\in \near(j)} := h_j \\
    \tau_j(h_j, \mathbf{x}_j) &= \sum_{i \in \near(j)}R(h_{ij}^{T^{lg}}, n_{ij}),
\end{align}
where $h^{T^{lg}}_{ij}$ and $n_{ij}$ are defined as in \cref{alg:hmp}.

Whenever a directed graph \(G\) acyclic, its directed line graph $L(G)$ will also be acyclic which implies that even without edge removal in \(L(G)\) (step 10 in \cref{alg:hmp}) message passing on the line graph ensures that information starting from node \(i\) of \(G\) can never return to itself after an arbitrary number of message passing steps. If the shortest cycle in \(G\) has length \(g\) (i.e., \(G\) has \textit{girth} \(g\)), one can do at most \(g-2\) message passing steps without returning information if no edges of \(L(G)\) are removed. The shift $-2$ comes from the fact that initialization of the node features (step 7 in \cref{alg:hmp}) and the readout (step 14 in \cref{alg:hmp}) effectively bridge one edge of $G$ each.
\subsection{Computational Complexities}\label{app:complexities}

\renewcommand{\thetheorem}{\ref{th:complexities}}
\begin{theorem}
   Consider a GNN-based flow model with a fully connected graph \(G_{fc}\) and \(T\) message passing steps and a HollowFlow model with a \(k\) neighbors graph \(G_k\) and \(T^{lg}\) message passing steps. Let both graphs have \(n\) nodes and \(d\)-dimensional node features.
   
   The computational complexity of sampling from the fully connected GNN-based flow model including sample likelihoods is
\begin{align}
    \text{\upshape RT}^{step}(G_{fc}) &= \Order(Tn^3d) \tag{\ref{eq:RTGstep}}, 
\end{align}
while the complexity of the HollowFlow model for the same task is
\begin{align}
    \text{\upshape RT}^{step}(L(G_k)) &= \Order(n(T^{lg}k^2 + dk)) \tag{\ref{eq:RTLGstep}}.
\end{align}
Moreover, the speed-up of HollowFlow compared to the GNN-based flow model is
 \begin{align}
     \frac{\text{\upshape RT}^{step}(G_{fc})}{\text{\upshape RT}^{step}(L(G_k))} = \Order\left(\frac{Tn^2d}{T^{lg}k^2 + dk}\right)\tag{\ref{eq:speedup}}.
 \end{align}
    \addtocounter{theorem}{-1}
    \label{th}
\end{theorem}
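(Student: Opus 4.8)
The plan is to decompose the per-step inference cost into a forward-evaluation term and a divergence (likelihood) term, $\text{RT}^{step} = \text{RT}^{f} + \text{RT}^{\nabla}$, and to estimate each term using two standard facts: (i) the cost of one message-passing sweep scales linearly in the number of edges that are traversed, with the hidden width and the constant-size learnable maps $\phi,\psi,R$ absorbed into the implied constant; and (ii) by reverse-mode automatic differentiation \cite{Griewank2008}, a single vector-Jacobian product (one backward pass) costs asymptotically the same as one forward pass through the same computational graph. The edge counts I would pull directly from \cref{tab:complexity}: $\#E = \Order(n^2)$ and $\#E^{lg} = \Order(n^3)$ for the fully connected graph, and $\#E = \Order(nk)$, $\#E^{lg} = \Order(nk^2)$ for the $k$NN graph.

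For the fully connected baseline, I would first bound the forward pass by $\text{RT}^{f}(G_{fc}) = \Order(T\,\#E) = \Order(Tn^2)$. Since this velocity field has no exploitable Jacobian structure, extracting the full diagonal of the $nd \times nd$ Jacobian needed for $\nabla \cdot b$ requires one backward pass per input coordinate, i.e.\ $N = nd$ backward passes, each of cost $\Order(Tn^2)$ by fact (ii); hence $\text{RT}^{\nabla}(G_{fc}) = \Order(nd \cdot Tn^2) = \Order(Tn^3d)$, which dominates the forward term and yields \cref{eq:RTGstep}. For HollowFlow I would bound the forward pass by the message passing on the line graph, $\text{RT}^{f}(L(G_k)) = \Order(T^{lg}\,\#E^{lg}) = \Order(T^{lg}nk^2)$. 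The key step is the divergence term: by \cref{th:hmp} the Jacobian splits into block-hollow and block-diagonal parts, so only $d$ backward passes are needed, and after applying \texttt{detach()} to $h^{T^{lg}}_{ij}$ these passes traverse only the readout $\tau$ (step 14), not the full message-passing stack. The readout touches each edge of $G$ once, so each of the $d$ backward passes costs $\Order(\#E) = \Order(nk)$, giving $\text{RT}^{\nabla}(L(G_k)) = \Order(dnk)$ and, summing, $\text{RT}^{step}(L(G_k)) = \Order(n(T^{lg}k^2 + dk))$, i.e.\ \cref{eq:RTLGstep}. The speed-up \cref{eq:speedup} is then just the ratio of the two totals, after cancelling the common factor of $n$.

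The main obstacle --- and the only place the argument is more than bookkeeping --- is justifying that the $d$ HollowFlow backward passes cost $\Order(dnk)$ rather than $\Order(dT^{lg}nk^2)$. This hinges on the detaching argument from the proof of \cref{th:hmp}: detaching $h^{T^{lg}}_{ij}$ annihilates the block-hollow contribution while leaving the block-diagonal part intact, so the differentiated subgraph is \emph{exactly} the readout and is independent of both $T^{lg}$ and the line-graph connectivity. I would make this precise by observing that the detached computational graph has size $\Order(\#E)$ and invoking fact (ii) on that subgraph alone. Everything else is substitution of the edge counts from \cref{tab:complexity}, noting that the forward evaluation is shared between the ODE integration step and the backward passes so that no additional term arises.
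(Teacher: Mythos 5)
Your proposal is correct and follows essentially the same route as the paper's own proof: the same decomposition $\text{RT}^{step}=\text{RT}^f+\text{RT}^{\nabla}$, the same edge-count-based forward-pass estimates from \cref{tab:complexity}, and the same key observation that the $d$ HollowFlow backward passes only traverse the transformer/readout (cost $\Order(\#E_k)$ each) rather than the full line-graph stack. No gaps; your explicit justification of why the backward cost is $\Order(dnk)$ rather than $\Order(dT^{lg}nk^2)$ is, if anything, slightly more careful than the paper's one-line remark.
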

\begin{proof}
The sampling procedure involves a forward pass and one divergence calculation for each integration step. We start by estimating the runtimes of the forward pass \(\text{RT}^f\). The key assumption is that the computational complexity of the forward pass is proportional to the number of edges of the graph (see \cref{tab:complexity}) and the number of message passing steps.: 
\begin{align}
    \text{RT}^f(G_{fc}) = \Order(T\#E_{fc}) = \Order(Tn^2)\\
    \text{RT}^f(L(G_k)) = \Order(T^{lg} \#E^{lg}_{k}) = \Order(T^{lg}nk^2).
\end{align}

The computational complexity of the divergence calculation \(RT^{\nabla}\) will be proportional to the number of backward passes necessary to compute the divergence times the complexity for each backward pass. For the fully connected GNN we need \(dn\) backward passes at the cost of \(\Order(T\#E_{fc})\):
\begin{align}
    \text{RT}^{\nabla}(G_{fc}) = \Order(T\#E_{fc}dn) = \Order(Tn^3d).
\end{align}
For HollowFlow we only need \(d\) backward passes by exploiting the structure of the Jacobian (See \cref{th:hmp}). Each of these backward passes has a complexity \(\Order(\#E_{k})\), as we only need to backpropagate through the transformer, not the conditioner:
\begin{align}
    \text{RT}^{\nabla}(L(G_k)) = \Order(\#E_k d) = \Order(dnk).
\end{align}
The runtime of one integration step, \(\text{RT}^{step}\), that is proportional to the total runtime, will scale as \(\text{RT}^f + \text{RT}^{\nabla}\). Thus, we get
\begin{align}
    \text{RT}^{step}(G_{fc}) &= \text{RT}^f(G_{fc}) + \text{RT}^{\nabla}(G_{fc}) = \Order(Tn^2) +\Order(Tn^3d) = \Order(Tn^3d) \label{eq:rtstep_g},\\
    \text{RT}^{step}(L(G_k)) &= \text{RT}^f(L(G_k)) + \text{RT}^{\nabla}(L(G_k)) = \Order(T^{lg}nk^2) + \Order(dnk) = \Order(n(T^{lg}k^2 + dk))\label{eq:rtstep_lg},
\end{align}
where the right term in the sum in \cref{eq:rtstep_g} dominates as \(\Order(n^3) > \Order(n^2)\) and $d$ is a constant independent of $n$ and $T$.
The speed-up during sampling with likelihood evaluation when using HollowFlow compared to a standard, fully connected GNN is thus
\begin{align}
    \frac{\text{RT}^{step}(G_{fc})}{\text{RT}^{step}(L(G_k))} = \Order\left(\frac{Tn^2d}{T^{lg}k^2 + dk}\right) = \Order\left(\frac{Tnd}{T^{lg}}\right),
\end{align}
where the last equality assumes \(k = \Order(\sqrt{n})\) (see \cref{sec:connectivity}). In conclusion, we expect a speed-up of \(\Order(n^2)\) for constant \(k\) and a speed-up of \(\Order(n)\) for \(k = \Order(\sqrt{n})\).
\end{proof}

\subsection{Memory scaling of  \texorpdfstring{$B(t)$}{B(t)}}
From the definition of $B(t)$ (\cref{eq:Bt}) one might expect a cubic memory scaling in $n$. However, it is sufficient to only save information about from which nodes of $G$ all the nodes of $L(G)$ have received information from. If $G$ has $n$ nodes and $L(G)$ has $n_{lg}$ nodes, the size of the array will effectively be $nn_{lg}$. For a $k$NN graph, $n_{lg} = \mathcal{O}(nk)$ so the memory requirements of $B(t)$ in our implementation scale as $n^2k$.

\section{Additional Results}
\label{app:add_res}
\subsection{Non-equivariant HollowFlow}
\label{app:breaking_symmetries}
Adaptation of HollowFlow to non-equivariant GNN architectures is straightforward as HollowFlow only affects the construction of the underlying graph while preserving the message and update functions of the GNN including their equivariance properties. Relaxing permutation symmetry can be done by using unique embeddings for all nodes in the graph (as done for Alanine Dipeptide, see \cref{app:ala2}).

\subsection{Multi-head HollowFlow}

Using a $k$NN graph poses a locality assumption that might limit the models ability when learning the distribution of systems with long-range interactions (e.g., systems with coulomb interactions such as molecules). To circumvent the locality assumption while still keeping the computational cost manageable, we additional consider a multi-head strategy with $H$ heads. The central idea is to run HoMP on $H$ different graphs in parallel and sum up the result. These graphs can be constructed in different ways, we explore the following two strategies:
\begin{enumerate}
    \item {\bf Non-overlapping scale separation}: Consider a fully connected graph $G$ embedded in euclidean space with $\#E$ edges and sort the edges by length. Divide this list into $H$ equally sized chunks of length $\#E/H$ with no overlap. If there is a remainder, distribute the remaining edges to the heads as equally as possible.
    \item {\bf Overlapping scale separation with overlap number $I$}: Consider a fully connected graph $G$ embedded in euclidean space with $\#E$ edges and sort the edges by length. Divide this list into $H$ equally sized chunks of size $\#E/(H - I)$ such that the centers of these chunks are spaced evenly. If there is a remainder, distribute the remaining edges to the heads as equally as possible.
\end{enumerate}
Both strategies ensure that there is no locality assumption. The first strategy assumes scale separability while the second one allows for a (limited) scale overlap. In terms of scaling, observe that $\#E = \Order(n^2)$ for a fully connected graph. Thus, each head has $\Order(n^2/(H-I))$ edges giving us on average $\Order(n/(H-I))$ edges per node. Finally, the line graph of each head has thus $\Order(n^3/(H-I)^2)$ edges, giving us $\Order(Hn^3/(H-I)^2)$ line graph edges in total. Thus, if we, e.g., choose the number of heads $H$ and the overlap number $I$ proportional to $n$, the scaling is $\Order(n^2)$.

We test both of the aforementioned multi-head strategies on LJ13, LJ55 and Alanine Dipeptide. The results together with the single-head results from the main paper can be found in \cref{tab:LJ13_multi,tab:LJ55_multi,tab:ALA2}. 

\begin{table}
\centering
\renewcommand{\arraystretch}{1.2}
\caption{LJ13}
\begin{tabular}{lllll}
\toprule
 & ESS $(\%)$& ESS\(_{rem} \ (\uparrow)\) $(\%)$& EffSU\(_{rem}\ (\uparrow)\) & EffSU \\
\midrule
$k=2$ & $0.054_{0.039}^{0.068}$ & $2.92_{2.89}^{2.95}$ & $1.059_{1.049}^{1.070}$ & $1.16_{0.32}^{1.96}$ \\
$k=4$ & $0.041_{0.001}^{0.107}$ & $10.34_{10.28}^{10.40}$ & $2.649_{2.631}^{2.667}$ & $0.64_{0.01}^{1.62}$ \\
$k=6$ & $3.300_{2.410}^{4.301}$ & $20.20_{20.11}^{20.29}$ & $\mathbf{3.260}_{3.243}^{3.278}$ & $31.87_{7.77}^{53.80}$ \\
$k=8$ & $2.745_{2.250}^{3.236}$ & $20.64_{20.55}^{20.72}$ & $2.310_{2.297}^{2.323}$ & $18.26_{5.65}^{31.12}$ \\
$k=10$ & $1.057_{0.803}^{1.301}$ & $16.50_{16.41}^{16.58}$ & $1.484_{1.475}^{1.492}$ & $5.51_{1.61}^{9.19}$ \\
$k=12$ & $4.069_{3.260}^{4.926}$ & $19.72_{19.63}^{19.80}$ & $1.627_{1.619}^{1.636}$ & $20.11_{5.66}^{35.55}$ \\
$H=2$ & $2.199_{1.783}^{2.604}$ & $17.55_{17.47}^{17.64}$ & $1.613_{1.604}^{1.622}$ & $12.08_{3.51}^{20.58}$ \\
$H=3$ & $0.741_{0.547}^{0.932}$ & $13.78_{13.71}^{13.85}$ & $1.339_{1.331}^{1.347}$ & $4.17_{1.11}^{6.98}$ \\
$H=4$ & $0.333_{0.054}^{0.742}$ & $11.32_{11.26}^{11.38}$ & $1.064_{1.057}^{1.071}$ & $1.85_{0.17}^{4.17}$ \\
$H=2, I=1$ & $3.730_{2.466}^{5.204}$ & $23.35_{23.25}^{23.44}$ & $0.864_{0.859}^{0.868}$ & $8.53_{1.98}^{14.71}$ \\
$H=3, I=1$ & $2.484_{1.626}^{3.442}$ & $19.06_{18.98}^{19.14}$ & $1.206_{1.200}^{1.213}$ & $9.23_{2.34}^{15.60}$ \\
$H=4, I=1$ & $0.322_{0.189}^{0.408}$ & $11.52_{11.46}^{11.58}$ & $0.845_{0.840}^{0.851}$ & $1.42_{0.35}^{2.33}$ \\
$H=4, I=2$ & $2.239_{1.934}^{2.541}$ & $15.32_{15.25}^{15.40}$ & $0.849_{0.844}^{0.854}$ & $7.40_{2.24}^{12.90}$ \\
baseline $k=6$ & $0.002_{0.001}^{0.003}$ & $\mathbf{43.52}_{43.41}^{43.63}$ & $1.010_{1.005}^{1.014}$ & $0.00_{0.00}^{0.00}$ \\
baseline & $2.132_{0.417}^{2.231}$ & $40.73_{40.60}^{40.87}$ & $1$ & $1$ \\
\bottomrule
\end{tabular}
\label{tab:LJ13_multi}
\end{table}

\begin{table}
\centering
\renewcommand{\arraystretch}{1.2}
\caption{LJ55}
\begin{tabular}{lllll}
\toprule
 & ESS $(\%)$& ESS\(_{rem} \ (\uparrow)\) $(\%)$& EffSU\(_{rem}\ (\uparrow)\) & EffSU \\
\midrule
$k=7$ & $0.006_{0.003}^{0.010}$ & $0.53_{0.51}^{0.56}$ & $\mathbf{93.737}_{88.484}^{99.071}$ & $82.26_{31.04}^{136.56}$ \\
$k=27$ & $0.007_{0.003}^{0.012}$ & $0.64_{0.61}^{0.68}$ & $9.466_{8.965}^{9.986}$ & $8.46_{3.31}^{14.90}$ \\
$k=55$ & $0.020_{0.014}^{0.025}$ & $\mathbf{0.74}_{0.71}^{0.77}$ & $4.365_{4.144}^{4.583}$ & $9.11_{4.57}^{13.18}$ \\
$H=2$ & $0.008_{0.003}^{0.011}$ & $0.53_{0.50}^{0.56}$ & $5.259_{4.946}^{5.580}$ & $6.44_{2.23}^{9.07}$ \\
$H=4$ & $0.006_{0.003}^{0.009}$ & $0.71_{0.67}^{0.74}$ & $7.198_{6.819}^{7.576}$ & $4.86_{1.85}^{8.05}$ \\
$H=6$ & $0.009_{0.005}^{0.013}$ & $0.48_{0.45}^{0.51}$ & $5.901_{5.551}^{6.244}$ & $9.39_{3.88}^{13.89}$ \\
$H=8$ & $0.015_{0.010}^{0.020}$ & $0.53_{0.50}^{0.55}$ & $7.144_{6.712}^{7.569}$ & $15.41_{6.96}^{23.41}$ \\
$H=10$ & $0.006_{0.003}^{0.010}$ & $0.53_{0.50}^{0.56}$ & $7.604_{7.153}^{8.049}$ & $7.02_{2.90}^{11.52}$ \\
baseline $k=27$ & $0.007_{0.003}^{0.012}$ & $0.64_{0.61}^{0.67}$ & $0.324_{0.307}^{0.342}$ & $0.28_{0.11}^{0.48}$ \\
baseline & $0.048_{0.029}^{0.051}$ & $2.96_{2.89}^{3.03}$ & $1$ & $1$ \\
\bottomrule
\end{tabular}
\label{tab:LJ55_multi}
\end{table}

\subsection{Alanine Dipeptide}
\label{app:ala2}
We additionally trained HollowFlow and a corresponding baseline on Alanine Dipeptide using a number of different graph connecting strategies (multi headed and single headed). For all models, we break permutation equivariance. Interestingly, the overlapping multi-head strategy with $H = 3$ and $I = 1$ seems to perform best in terms of $\text{ESS}_{rem}$ out of all HollowFlow models, suggesting that multi-head approaches might be a promising direction to alleviate limitations of HollowFlow.  However, we do not observe an effective speed-up of HollowFlow compared to the non-hollow baseline. We believe that further hyperparameter tuning and engineering efforts might close this performance gap. 
\begin{table}
\centering
\renewcommand{\arraystretch}{1.2}
\caption{ALA2}
\begin{tabular}{lllll}
\toprule
 & ESS $(\%)$& ESS\(_{rem} \ (\uparrow)\) $(\%)$& EffSU\(_{rem}\ (\uparrow)\) & EffSU \\
\midrule
$H=2$ & $0.008_{0.005}^{0.011}$ & $0.68_{0.65}^{0.71}$ & $0.566_{0.544}^{0.589}$ & $3.63_{0.27}^{5.56}$ \\
$H=3$ & $0.012_{0.007}^{0.017}$ & $0.23_{0.22}^{0.25}$ & $0.198_{0.183}^{0.213}$ & $4.97_{0.42}^{10.09}$ \\
$H=4$ & $0.006_{0.003}^{0.008}$ & $0.11_{0.09}^{0.13}$ & $0.114_{0.102}^{0.128}$ & $2.71_{0.22}^{5.56}$ \\
$H=5$ & $0.003_{0.002}^{0.005}$ & $0.04_{0.03}^{0.05}$ & $0.042_{0.034}^{0.050}$ & $1.43_{0.11}^{3.24}$ \\
$H=3, I=1$ & $0.017_{0.013}^{0.020}$ & $0.73_{0.69}^{0.76}$ & $0.399_{0.382}^{0.416}$ & $3.64_{0.42}^{7.00}$ \\
$k=11$ & $0.010_{0.006}^{0.012}$ & $0.52_{0.50}^{0.54}$ & $0.588_{0.550}^{0.618}$ & $4.52_{0.29}^{8.59}$ \\
baseline & $0.135_{0.019}^{0.224}$ & $\mathbf{16.14}_{15.97}^{16.29}$ & $\mathbf{1}$ & $1$ \\
\bottomrule
\end{tabular}
\label{tab:ALA2}
\end{table}

\section{Experimental Details}
\label{app:experimental_details}
\subsection{Code and Libraries}
All models and training is implemented using \textit{PyTorch} \cite{Ansel_PyTorch_2_Faster_2024} with additional use of the following libraries: \textit{bgflow} \cite{noe2019boltzmann,kohler_equiv2020}, \textit{torchdyn} \cite{Poli_TorchDyn_Implicit_Models}, \textit{TorchCFM} \cite{tong2024improving,cfm2023} and \textit{SchNetPack} \cite{schutt2019schnetpack,schutt2023schnetpack}. The conditional flow matching and equivariant optimal transport implementation is based on the implementation used in \cite{equivfm}. The implementation of the PaiNN \cite{PaiNN} architecture is based on \cite{schutt2019schnetpack,schutt2023schnetpack}. The code and the models are available here: \url{https://github.com/olsson-group/hollowflow}.
\subsection{Benchmark Systems}
The potential energy of the Lennard-Jones systems LJ13 and LJ55 is given by 
\begin{align}
    U^{LJ}(\x) = \frac{\epsilon}{2\tau}\left[\sum_{i,j}\left( \left(\frac{r_m}{d_{ij}}\right)^{12} - 2\left(\frac{r_m}{d_{ij}}\right)^6 \right)\right], 
\end{align}
where the parameters are as in \cite{equivfm}: \(r_m = 1\), \(\epsilon = 1 \) and \(\tau = 1\). $d_{ij}$ is the pairwise euclidean distance between particle $i$ and $j$. Detailed information about the molecular system (Alanine Dipeptide) can be found in \cite{equivfm}. 
\subsection{Training Data}
We used the same training data as in \cite{equivfm} available at \url{https://osf.io/srqg7/?view_only= 28deeba0845546fb96d1b2f355db0da5}. The training data has been generated using MCMC, details can be found in the appendix of \cite{equivfm}. For all systems we used \(10^5\) randomly selected samples for training as in \cite{equivfm}. Validation was done using \(10^4\) of the remaining samples.
\subsection{Hyperparameters}

\paragraph{Choice of the Hyperparameter $k$}
Heuristically, we expect maximal expressiveness for mid-range values of $k$: By construction, some of the connections in the line graph need to be removed after a certain number of message passing (MP) steps (See appendix B for more details). The larger $k$, the more connections need to be removed after only one MP step up to the point where (almost) no connections are left after only one MP step. We thus have two counteracting effects when $k$ increases: The first MP step gains more connections and thus expressiveness while the second MP step and all later MP steps loose connections and expressiveness. This effect is illustrated in \cref{fig:disconnecting}. Eventually, the latter seems to take over limiting the expressiveness of the model for large values of $k$. This non-linear qualitative scaling of performance with $k$ is experimentally supported by the results in \cref{tab:LJ13}.
\begin{figure}[!ht]
     \centering
     \includegraphics[width=0.50\textwidth]{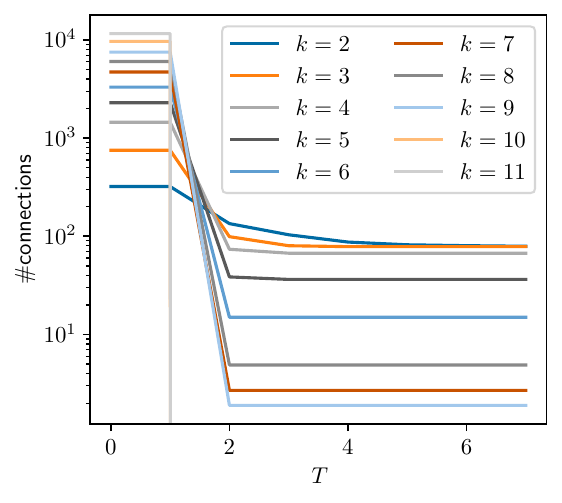}
     \caption{Average number of connections left in the line graph of a $k$NN graph $G$ as a function of the number of message passing steps. $G$ has 55 nodes whose coordinates are sampled from a standard Gaussian. The average is taken over 10 independently sampled graphs. The larger $k$, the faster the line graph disconnects.}
     \label{fig:disconnecting}
\end{figure}

\paragraph{Discontinuities Caused by $k$NN Graph}
The vector field $b_{\theta}(x)$ (\cref{eq:ode}) is constructed using a $k$NN graph. This might cause discontinuities on the decision boundaries of the $k$NN construction. As the divergence of the vector field is only defined if it is differentiable, these discontinuities can cause theoretical issues, including but not limited to the continuity equation not being well-defined everywhere \cref{eq:cont}. We have not observed any numerical instabilities or other issues in practice. 
To further validate empirically that the discontinuities do not cause problems, we tracked the divergence of a trained, fully connected as well as a trained, 6 nearest-neighbors  HollowFlow LJ13 model during sampling using a forward Euler scheme with 20 integration steps. We monitor the maximum and average absolute difference of the divergence of consecutive steps, computed over 1000 initial conditions, to assess whether there are significant discontinuities in the divergence of the nearest-neighbors model compared to the fully connected one. Results can be found in \cref{tab:discontinuities}. While both values are slightly higher for the $k$NN model, there are no significant differences further supporting our claim that the discontinuities do not cause issues in practice.

\begin{table}[!ht]
    \centering
    \caption{\centering Maximal and average absolute difference of the divergence of consecutive steps, computed over 1000 initial conditions, for two different LJ13 HollowFlow models.}
\begin{tabular}{lll}
\toprule
model  & maximal difference & average difference \\
\midrule
fully connected &  	72.5 & 5.3 \\
\(k=6\) &  	74.2 &  	5.8 \\
\bottomrule
\end{tabular}
    \label{tab:discontinuities}
\end{table}

\paragraph{Hyperparameters Used in Experiments:}
Depending on the system, we used a different number of message passing steps and a different value for the hidden dimension \(n_h\), following \cite{equivfm}. For LJ13, we chose \(n_h=32\) for all runs. All HollowFlow experiments with LJ13 and $k$NN graphs used two message passing steps while the baselines and the multi-head experiments used three. For LJ55, we chose \(n_h=64\) for all runs. All HollowFlow experiments with LJ55 and $k$NN graphs used two message passing steps while the baselines and the multi-head experiments used seven. For Alanine Dipeptide, we chose \(n_h=64\) and five message passing steps for all runs. All neural networks in the PaiNN architecture use the \textit{SiLU} activation function \cite{1702.03118}. 

The training details for LJ13, LJ55 and Alanine Dipeptide are reported in \cref{tab:Training_LJ13,tab:Trainig_LJ55,tab:Trainig_ALA2}. For LJ13, we selected the last model for all $k$NN experiments and the fully connected baseline experiment, while we selected the model with the lowest validation loss for all remaining experiments for inference. For LJ55 and Alanine Dipeptide we always selected the model with the lowest validation loss for inference. All training was done using the \textit{Adam} optimizer \cite{kingma2017adammethodstochasticoptimization}.

Generally, we observed that HollowFlow can be trained with a larger learning rate compared to the baseline without running into instability issues. Nevertheless, the effect on model performance seemed limited.

\begin{table}[!ht]
    \centering
    \caption{Training details LJ13}
\begin{tabular}{lllll}
\toprule
 & \parbox{1.2cm}{batch size} & learning rate & epochs & \parbox{1.5cm}{training time (h)} \\
\midrule
\(k=2\) & 1024 & \(5\times10^{-4}\) & 1000 & 5.8 \\
\(k=4\) & 1024 & \(5\times10^{-4}\) & 1000 & 6.3 \\
\(k=6\) & 1024 & \(5\times10^{-4}\) & 1000 & 6.7 \\
\(k=8\) & 1024 & \(5\times10^{-4}\) & 1000 & 7.3 \\
\(k=10\) & 1024 & \(5\times10^{-4}\) & 1000 & 7.5 \\
\(k=12\) & 1024 & \(5\times10^{-4}\) & 1000 & 7.9\\
$H=2$ & 1024 & \(5\times10^{-4}\) & 1000 & 9.9  \\
$H=3$ & 1024 & \(5\times10^{-4}\) & 1000 & 10.3  \\
$H=4$ & 1024 & \(5\times10^{-4}\) & 1000 & 10.7 \\
$H=2, I=1$ & 1024 & \(5\times10^{-4}\) & 1000 & 13.1 \\
$H=3, I=1$ & 1024 & \(5\times10^{-4}\) & 1000 & 11.0 \\
$H=4, I=1$ & 1024 & \(5\times10^{-4}\) & 1000 & 11.3 \\
$H=4, I=2$ & 1024 & \(5\times10^{-4}\) & 1000 & 12.7 \\
baseline $k=6$ & 256 & \(5\times10^{-4}\) & 1000 & 2.6  \\
baseline & 1024 & \(5\times10^{-4}\) & 1000 & 2.5 \\
\bottomrule
\end{tabular}
    \label{tab:Training_LJ13}
\end{table}

\begin{table}[!ht]
    \centering
    \caption{Training details LJ55}
\begin{tabular}{lllll}
\toprule
 & \parbox{1.2cm}{batch size} & learning rate & epochs & \parbox{1.5cm}{training time (h)} \\
 \midrule
\(k=7\) & 256 & (\(5  \times 10^{-4}\) to \(5  \times 10^{-5}\), \(5  \times 10^{-5}\)) & (150, 850) & 10.5 \\
\(k=27\) & 70 & (\(5  \times 10^{-4}\) to \(5  \times 10^{-5}\), \(5  \times 10^{-5}\)) & (150, 69) & 21.9 \\
\(k=55\) & 30 & \(5  \times 10^{-3}\) to \(1.7 \times 10^{-3}\) & 110 & 32.0 \\
\(H=2\) & 128 & (\(5  \times 10^{-3}\) to \(5  \times 10^{-4}\), \(5  \times 10^{-4}\)) & (150, 850) & 16.7 \\
\(H=4\) & 128 & (\(5  \times 10^{-3}\) to \(5  \times 10^{-4}\), \(5  \times 10^{-4}\))  & (150, 850) & 16.8 \\
\(H=6\) & 80 & (\(5  \times 10^{-3}\) to \(5  \times 10^{-4}\), \(5  \times 10^{-4}\))  & (150, 260) & 11.8 \\
\(H=8\) & 128 &(\(5  \times 10^{-3}\) to \(5  \times 10^{-4}\), \(5  \times 10^{-4}\))  & (150, 850) & 14.0 \\
\(H=10\) & 128 &(\(5  \times 10^{-3}\) to \(5  \times 10^{-4}\), \(5  \times 10^{-4}\))  & (150, 850) & 14.0 \\
baseline $k=27$& 1024 & (\(5  \times 10^{-4}\) to \(5  \times 10^{-5}\), \(5  \times 10^{-5}\)) & (150, 850) & 6.6 \\
baseline & 256 & (\(5  \times 10^{-4}\) to \(5  \times 10^{-5}\), \(5  \times 10^{-5}\)) & (150, 844) & 24.0 \\
\bottomrule
\end{tabular}
    \label{tab:Trainig_LJ55}
\end{table}

\begin{table}[!ht]
    \centering
    \caption{Training details Alanine Dipeptide}
\begin{tabular}{lllll}
\toprule
 & \parbox{1.2cm}{batch size} & learning rate & epochs & \parbox{1.5cm}{training time (h)} \\
 \midrule
\(H=2\) & 100 & (\(5  \times 10^{-3}\) to \(5  \times 10^{-4}\), \(5  \times 10^{-4}\)) & (150, 850) & 20.3 \\
\(H=3\) & 100 & (\(5  \times 10^{-3}\) to \(5  \times 10^{-4}\), \(5  \times 10^{-4}\)) & (150, 850) & 17.2 \\
\(H=4\) & 128 & (\(5  \times 10^{-3}\) to \(5  \times 10^{-4}\), \(5  \times 10^{-4}\)) & (150, 850) & 16.7 \\
\(H=5\) & 128 & (\(5  \times 10^{-3}\) to \(5  \times 10^{-4}\), \(5  \times 10^{-4}\))  & (150, 850) & 16.8 \\
\(H=3, I=1\) & 80 & (\(5  \times 10^{-3}\) to \(5  \times 10^{-4}\), \(5  \times 10^{-4}\))  & (150, 260) & 11.8 \\
\(k=11\) & 128 &(\(5  \times 10^{-3}\) to \(5  \times 10^{-4}\), \(5  \times 10^{-4}\))  & (150, 850) & 14 \\
baseline & 1024 & \(5  \times 10^{-4}\) & 1000 & 6.6 \\
\bottomrule
\end{tabular}
    \label{tab:Trainig_ALA2}
\end{table}

All inference details for LJ13, LJ55 and Alanine Dipeptide are reported in \cref{tab:Inference_LJ13,tab:Inference_LJ55,tab:Inference_ALA2}. The integration of the ODE (\cref{eq:ode}) was performed using a fourth order Runge Kutta solver with a fixed step size. We used 20 integration steps.

\begin{table}[!ht]
    \centering
    \caption{Sampling details LJ13}
     \setlength{\tabcolsep}{3pt}
\begin{tabular}{lrrrrrrr}
\toprule
 & \parbox{1.5cm}{\centering RT (s)} & \parbox{1.5cm}{\centering RT$^f$ (s)} & \parbox{1.5cm}{\centering RT$^{\nabla}$ (s)} & \parbox{1.2cm}{\centering batch size} & \parbox{1.5cm}{\centering GPU usage ($\%$)} & \parbox{1.5cm}{\centering GPU mem. usage ($\%$)} & \parbox{1.5cm}{\centering $\#$ samples} \\
\midrule
$k=2$ & 407 & 367 & 29 & 40000 & 71 & 66 & $2 \times 10^5$ \\
$k=4$ & 674 & 577 & 83 & 1024 & 61 & 92 & $2 \times 10^5$ \\
$k=6$ & 887 & 790 & 71 & 3500 & 73 & 67 & $2 \times 10^5$ \\
$k=8$ & 1141 & 1023 & 86 & 3500 & 82 & 46 & $2 \times 10^5$ \\
$k=10$ & 1411 & 1275 & 97 & 3500 & 82 & 86 & $2 \times 10^5$ \\
$k=12$ & 1532 & 1389 & 103 & 3500 & 83 & 84 & $2 \times 10^5$ \\
$H=2$ & 1723 & 1527 & 159 & 1024 & 66 & 73 & $2 \times 10^5$ \\
$H=3$ & 1886 & 1682 & 165 & 1024 & 57 & 42 & $2 \times 10^5$ \\
$H=4$ & 2050 & 1833 & 174 & 1024 & 54 & 26 & $2 \times 10^5$ \\
$H=2, I=1$ & 3581 & 3261 & 242 & 1024 & 79 & 64 & $2 \times 10^5$ \\
$H=3, I=1$ & 2488 & 2221 & 207 & 1024 & 66 & 96 & $2 \times 10^5$ \\
$H=4, I=1$ & 2395 & 2135 & 202 & 1024 & 60 & 53 & $2 \times 10^5$ \\
$H=4, I=2$ & 3197 & 2856 & 257 & 1024 & 59 & 52 & $2 \times 10^5$ \\
baseline $k=6$ & 4599 & 36 & 4450 & 12000 & 98 & 64 & $2 \times 10^5$ \\
baseline & 2508 & 29 & 2425 & 1000 & 85 & 18 & $1 \times 10^5$ \\
\bottomrule
\end{tabular}
    \label{tab:Inference_LJ13}
\end{table}

\begin{table}[!ht]
    \centering
    \caption{Sampling details LJ55}
    \setlength{\tabcolsep}{3pt}
\begin{tabular}{lrrrrrrr}
\toprule
 & \parbox{1.5cm}{\centering RT (s)} & \parbox{1.5cm}{\centering RT$^f$ (s)} & \parbox{1.5cm}{\centering RT$^{\nabla}$ (s)} & \parbox{1.2cm}{\centering batch size} & \parbox{1.5cm}{\centering GPU usage ($\%$)} & \parbox{1.5cm}{\centering GPU mem. usage ($\%$)} & \parbox{1.5cm}{\centering $\#$ samples} \\
\midrule
$k=7$ & 1370 & 1147 & 171 & 256 & 73 & 56 & $4 \times 10^4$ \\
$k=27$ & 12609 & 11820 & 599 & 70 & 96 & 61 & $4 \times 10^4$ \\
$k=55$ & 31439 & 30028 & 1114 & 30 & 96 & 83 & $4 \times 10^4$ \\
$H=2$ & 22602 & 20239 & 2173 & 10 & 79 & 67 & $4 \times 10^4$ \\
$H=4$ & 20048 & 17897 & 1902 & 15 & 87 & 87 & $4 \times 10^4$ \\
$H=6$ & 16586 & 14618 & 1707 & 18 & 87 & 84 & $4 \times 10^4$ \\
$H=8$ & 14960 & 13097 & 1596 & 20 & 87 & 69 & $4 \times 10^4$ \\
$H=10$ & 13725 & 11982 & 1445 & 25 & 90 & 80 & $4 \times 10^4$ \\
baseline $k=27$ & 353693 & 348 & 351221 & 256 & 99 & 65 & $4 \times 10^4$ \\
baseline & 529793 & 596 & 525997 & 250 & 99 & 82 & $4 \times 10^4$ \\
\bottomrule
\end{tabular}
    \label{tab:Inference_LJ55}
\end{table}

\begin{table}[!ht]
    \centering
    \caption{Sampling details Alanine Dipeptide}
    \setlength{\tabcolsep}{3pt}
\begin{tabular}{lrrrrrrr}
\toprule
 & \parbox{1.5cm}{\centering RT (s)} & \parbox{1.5cm}{\centering RT$^f$ (s)} & \parbox{1.5cm}{\centering RT$^{\nabla}$ (s)} & \parbox{1.2cm}{\centering batch size} & \parbox{1.5cm}{\centering GPU usage ($\%$)} & \parbox{1.5cm}{\centering GPU mem. usage ($\%$)} & \parbox{1.5cm}{\centering $\#$ samples} \\
\midrule
$H=2$ & 2791 & 2382 & 370 & 100 & 70 & 79 & $5 \times 10^4$ \\
$H=3$ & 2532 & 2123 & 369 & 100 & 75 & 81 & $5 \times 10^4$ \\
$H=4$ & 2167 & 1837 & 284 & 128 & 72 & 92 & $5 \times 10^4$ \\
$H=5$ & 2182 & 1848 & 285 & 128 & 74 & 85 & $5 \times 10^4$ \\
$H=3, I=1$ & 3734 & 3236 & 431 & 80 & 79 & 91 & $5 \times 10^4$ \\
$k=11$ & 1838 & 1633 & 178 & 128 & 78 & 68 & $5 \times 10^4$ \\
baseline & 26554 & 71 & 26094 & 1024 & 99 & 60 & $5 \times 10^4$ \\
\bottomrule
\end{tabular}
\label{tab:Inference_ALA2}
\end{table}
\subsection{Errors}
All errors were obtained as a $68 \%$ symmetric percentile around the median from bootstrap sampling using 1000 resampling steps. The reported value is the mean of these bootstrap samples.
\subsection{Runtimes}
\label{app:runtimes}
The runtimes were computed by directly measuring the wall-clock time of the computation of interest. Due to missing data, the runtimes of the forward and backward pass in \cref{fig:runtime} needed to be extrapolated from a different run with the same model by assuming the same ratios to the known total runtime.
\subsection{Weight Clipping}
\label{app:weight_clipping}
As described in \cref{sec:exp}, we remove a left and right percentile (i.e., one percent on each side) of the log importance weights, \(\log w_i\), to make the ESS and EffSU estimations more robust. The corresponding quantities are named ESS\(_{rem}\) and EffSU$_{rem}$. While these metrics are biased, this procedure was necessary to obtain a reliable and robust estimate of the effective speed-up of hollow flow as reflected in the errors of the right most column of \cref{tab:LJ13_multi,tab:LJ55_multi,tab:ALA2}.
\subsection{Computing Infrastructure}
All experiments were conducted on GPUs. The training and inference for LJ13 and Alanine Dipeptide was conducted on a \textit{NVIDIA Tesla V100 SXM2} with 32GB RAM. The training for LJ55 was conducted on a \textit{NVIDIA Tesla A100 HGX} with 40GB RAM, inference was performed on a \textit{NVIDIA Tesla A40} with 48GB RAM.

\end{document}